\def\eqref#1{equation~\ref{#1}}
\def\floor#1{\lfloor #1 \rfloor}
\def\1{\bm{1}}
\DeclareMathAlphabet{\mathsfit}{\encodingdefault}{\sfdefault}{m}{sl}
\SetMathAlphabet{\mathsfit}{bold}{\encodingdefault}{\sfdefault}{bx}{n}
\newcommand{\E}{\mathbb{E}}
\newcommand{\R}{\mathbb{R}}
\theoremstyle{plain}
\newtheorem{theorem}{Theorem}[section]
\newtheorem{proposition}[theorem]{Proposition}
\newtheorem{lemma}[theorem]{Lemma}
\newtheorem{corollary}[theorem]{Corollary}
\theoremstyle{definition}
\theoremstyle{remark}
\newtheorem{remark}[theorem]{Remark}
\def\R{\mathbb{R}}
\def\E{\mathbb{E}}
\title{Spectrum Extraction and Clipping \\ for Implicitly Linear Layers
}
\author{
  Ali Ebrahimpour-Boroojeny \\
  Computer Science \\
  UIUC \\
  \texttt{ae20@illinois.edu} \\
   \And
  Matus Telgarsky \\
  Courant Institute \\
  NYU \\
  \texttt{mjt10041@nyu.edu} \\
   \And
  Hari Sundaram \\
  Computer Science \\
  UIUC \\
  \texttt{hs1@illinois.edu}
}
\begin{document}
\maketitle

\begin{abstract}
  We show the effectiveness of automatic differentiation in efficiently
  and correctly computing and controlling the spectrum of \emph{implicitly} linear operators, a rich family of layer types including all standard convolutional and dense layers. We provide the first clipping method which is correct for general convolution layers, and illuminate the representational limitation that caused correctness issues in prior work. We study the effect of the batch normalization layers when concatenated with convolutional layers and show how our clipping method can be applied to their composition. By comparing the accuracy and performance of our algorithms to the state-of-the-art methods, using various experiments, we show they are more precise and efficient and lead to better generalization and adversarial robustness. We provide
  the code for using our methods at \url{https://github.com/Ali-E/FastClip}. 
\end{abstract}

\keywords{Spectral Norm \and Convolutional Layers \and Regularization}

\section{INTRODUCTION}


 Implicitly linear layers are key components of deep learning models and include any layer whose output can be written as an affine function of their input. This affine function might be trivial, such as a dense layer, or non-trivial, such as convolutional layers. These layers inherit appealing properties of linear transformations; not only are they flexible and easy to train, but also they have the same Jacobian as the transformation that the layer represents and a Lipschitz constant equal to its largest singular value. Therefore controlling the largest singular value of these layers, which is the same as the largest singular value of their Jacobians, not only contributes to the generalization of the model~\citep{bartlett2017spectrally}, but makes the model more robust to adversarial perturbations~\citep{szegedy2013intriguing,weng2018evaluating}, and prevents the gradients from exploding or vanishing during backpropagation. Although efficient algorithms have been introduced to bound the spectral norm of dense layers~\citep{miyato2018spectral}, computing and bounding them efficiently and correctly has been a challenge for the general family of \textit{implicitly} linear layers, such as convolutional layers.

Convolutional layers are a major class of implicitly linear layers that are used in many models in various domains. They are compressed forms of linear transformations with an effective rank that depends on the dimensions of input rather than their filters. A 2d-convolutional layer with a stride of $1$ and zero padding, whose kernel has dimensions $(c_{out}, c_{in}, k, k)$ and is applied to an input of size $n\times n$ represents a linear transformation of rank $\min(c_{in}, c_{out}) n^2$. This compression is beneficial for training large and deep models as it reduces the number of parameters drastically while keeping the flexibility of performing the higher-rank transformation. Nevertheless, this compression makes it challenging to compute and control the spectrum. Clearly, a straightforward way of computing the spectrum of the convolutional layers is to unroll them to build the explicit matrix form of the transformation and compute its singular values. In addition to the complexity of computing the matrix representation for various padding types and sizes and different strides, this procedure is very slow, as represented in prior work~\citep{sedghi2018singular}, and forfeits the initial motivation for using them. The problem becomes much more challenging if one decides to bound the spectral norms of these layers during training without heavy computations that make this regularization impractical.

There has been an ongoing effort to design methods for controlling the spectrum of convolutional layers in either trained models or during the training
; however, neither of these methods work correctly for all standard convolutional layers. Moreover, they are either computationally extensive or rely on heuristics. 
In this work, we study both problems of extracting the spectrum and controlling them for the general family of implicitly linear layers. We give efficient algorithms for both tasks that scale to large models without incurring noticeable computational barriers. We also unveil some previously neglected limitations of convolutional layers in representing arbitrary spectrums. Up to this point, researchers have assumed they can modify the spectrum of convolutional layers in an arbitrary way. 
Although the experiments center around dense layers and convolutional layers, as we will discuss, our algorithms are general and do not assume properties that are specific to these layers. The only assumption is that their transformation can be represented as an affine function $f(x) = M_W x + b$, in which $W$ represents the parameters of the layer, but the transformation matrix $M_W$ is not necessarily explicit and might need further computations to be derived. 


In detail, the contributions are as follows:

\vspace{-3pt}

\textit{Efficient algorithm for extracting the spectrum:} We use auto-differentiation to implicitly perform shifted subspace iteration algorithm on any implicitly linear layer~(\Cref{sec-extraction}). Our method is correct for all convolutions and can be applied to the composition of layers as well. It is also more efficient than prior work for extracting top-$k$ singular values~(\Cref{sec-powerqr}).

\textit{Studying the limitations of convolutional layers:} We are the first to reveal the limitation of convolutional layers with circular padding in representing arbitrary spectrums~(\Cref{sec-limitations}). 

\textit{Fast and precise clipping algorithm:} We present a novel algorithm for exact clipping of the spectral norm to arbitrary values for implicitly linear layers in an iterative manner~(\Cref{sec-clipping}). Unlike prior methods, our algorithm works for any standard convolutional layer~(\Cref{fig:simple-clip-compare}) and can be applied to the composition of layers~(\Cref{sec-bn}). We show the effectiveness of our method to enhance the generalization and robustness of the trained models in\textbf{~\Cref{tab:bn-cifar}}.



\subsection{Related Work}

\citet{miyato2018spectral} approximate the original transformation by reshaping the kernel to a $n_{in}k^2 \times n_{out}$ matrix and perform the power-iteration they designed for the dense layers to compute the spectral norm of the convolutional layers. To clip the spectral norm, they simply scale all the parameters to get the desired value for the largest singular value.~\citet{farnia2018generalizable} and~\citet{gouk2021regularisation} use the transposed convolution layer to perform a similar power iteration method to the one introduced by~\citet{miyato2018spectral} for dense layers, and then perform the same scaling of the whole spectrum.~\citet{virmaux2018lipschitz} perform the power method implicitly using the automatic differentiation that is similar to our implicit approach; however, they do not provide any algorithm for clipping the spectral norms and leave that for future work. Also, for extracting multiple singular values, they use successive runs of their algorithm followed by deflation, which is much less efficient than our extraction method.
Some other works consider additional constraints.~\citet{cisse2017parseval} constrained the weight matrices to be orthonormal and performed the optimization on the manifold of orthogonal matrices.~\citet{sedghi2018singular} gave a solution that works only for convolutional layers with circular padding and no stride. For other types of convolutional layers, they approximate the spectral norm by considering the circular variants; however, the approximation error can be large for these methods.~\citep{senderovich2022towards} extends the method of~\citet{sedghi2018singular} to support convolutional layers with strides. It also provides an optional increase in speed and memory efficiency, albeit with the cost of expressiveness of convolutions, through specific compressions.~\citet{delattre2023efficient} use Gram iteration to derive an upper-bound on the spectral norm of the circular approximation in a more efficient way. However, they do not provide a solution for strides other than $1$. Therefore, in the best case, they will have the same approximation error as~\citep{sedghi2018singular}. A parallel line of research on controlling the spectrum of linear layers seeks to satisfy an additional property, gradient norm preserving, in addition to making each of the dense and convolutional layers $1$-Lipschitz~\cite{singla2021skew,yu2021constructing,trockman2021orthogonalizing,singla2021improved,xu2022lot,achour2022existence}. Some state-of-the-art methods in this line of work are also based on the clipping method introduced by~\citet{sedghi2018singular}~\citep{yu2021constructing,xu2022lot}. Therefore, those approaches are even slower and less efficient.
\section{METHODS}
\label{sec-methods}


This section is organized as follows. We introduce our algorithm, PowerQR, for extracting top-$k$ singular values and vectors of any implicitly linear layer or their composition in~\Cref{sec-extraction}. 
In~\Cref{sec-clipping}, we introduce our accurate and efficient algorithm, FastClip, for clipping the spectral norm of implicitly linear layers, and explain how it can be efficiently used with the PowerQR algorithm during training.  In~\Cref{sec-limitations}, we establish the inability of convolutions to attain any arbitrary spectrum.
The reader can find the proofs in~\Cref{apx-sec-omitted-proofs}. Next, we introduce notation.




 \paragraph{Notation.} An implicitly linear layer can be written as an affine function $f(x) = M_W x + b$, where $x \in \R^n$, $M_W \in \R^{m\times n}$, and $b \in \R^m$. $M_W$ is not necessarily the explicit form of the parameters of the layer, in which case the explicit form is shown with $W$ (e.g., the kernel of convolutional layers); otherwise, $M_W$ is the same as $W$. The spectral norm of $M_W$, which is the largest singular value of $M_W$, is shown with $\|M_W\|_2$. $\sigma_i(W)$ is used to represent the $i$-th largest singular value of matrix $M_W$. The largest singular value might be referred to as either $\|W\|_2$ or $\|M_W\|_2$. We use $\omega=\mathrm{exp}(2\pi i/n)$ (the basic $n$-th roots of unity), where $n$ is the rank of the linear transformation. $\Re(.)$ returns the real part of its input, and we also define the symmetric matrix $A_W$ as $M_W^\top M_W$. Hereinafter $[k]$ will be used to show the set $\{0,1,2,\dots, k\}$. When studying the composition of affine functions, we use the word “concatenation” to refer to the architecture of the network (i.e., succession of the layers), and “composition” to refer to the mathematical form of this concatenation. 

\subsection{Spectrum Extraction}
\label{sec-extraction}



We introduce our \textit{PowerQR} algorithm, which performs an implicit version of the shifted subspace iteration algorithm on any implicitly linear layer. Shifted subspace iteration is a common method for extracting the spectrum of linear transformations; the shift parameter makes the algorithm more stable and faster than the regular power method. Also, it is much more efficient when multiple singular values and vectors are of interest, compared to iterative calls to the regular power method followed by deflation (see chapter 5 of~\citet{saad2011numerical} for more details of this algorithm). However, the direct application of this algorithm requires the explicit matrix form of the transformation. In~\Cref{alg:powerqr}, we show how auto-differentiation can be used to perform this algorithm in an implicit way, without requiring the explicit matrix form. Regarding the complexity of~\Cref{alg:powerqr}, other than the $O(n^2 k)$ complexity of QR decomposition in line $6$, it has an additional cost of computing the gradients of the layer at line $4$ for a batch of $k$ data points, which is dominated by the former cost.



\begin{algorithm}[t]
\caption{PowerQR ($f, X, N, \mu=1$)}\label{alg:powerqr}
\begin{algorithmic}[1]
\State {\bfseries Input:} Affine function $f$, Initial matrix $X \in \R^{n\times k}$, Number of iterations $N$, Shift value $\mu$ 
\State {\bfseries Output:}  Top $k$ singular values and corresponding right singular vectors
    \For{$i=1$ {\bfseries to} $N$}
        \State $X^\prime \gets \nabla_X \frac{1}{2} \|f(X) - f(0)\|^2$ \\ \Comment{$\nabla_x \frac{1}{2} \|f(x) - f(0)\|^2 = M^\top Mx$}
        \State $X \gets \mu X + X^\prime$
        \State $(Q, R) \gets \mathrm{QR}(X)$ \Comment{$\mathrm{QR}$ decomposition of $X$}
        \State $X \gets Q$
    \EndFor

\State $S \gets \sqrt{\mathrm{diag}(R - \mu I)}$ \Comment{Singular values}
\State $V \gets X$  \Comment{Right singular vectors}
\State {\bfseries Return} $S,V$

\end{algorithmic}
\end{algorithm}



\begin{proposition}
\label{lemma-shifted-subspace}
Let $f(x) = Mx + b$. Then~\Cref{alg:powerqr} correctly performs the shifted subspace iteration algorithm on $M$, with $\mu$ as the shift value.   
\end{proposition}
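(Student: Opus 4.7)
The plan is to verify that each iteration of Algorithm~\ref{alg:powerqr} implements one step of shifted orthogonal (subspace) iteration applied to the symmetric PSD matrix $A := M^\top M$ with spectral shift $\mu$, and then appeal to the standard convergence theory for the final extraction in lines 9--10.

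First I would establish that line 4 implicitly produces $AX$. Since $f(x) = Mx + b$, interpreting $f$ column-wise on the matrix $X$ gives $f(X) - f(0) = MX$, so $\tfrac12 \|f(X) - f(0)\|^2 = \tfrac12 \|MX\|_F^2 = \tfrac12 \operatorname{tr}(X^\top M^\top M X)$, whose gradient with respect to $X$ is exactly $M^\top M X = AX$. Consequently, line 5 computes $(\mu I + A)X$, i.e., one application of the shifted operator $B := \mu I + A$ to the current iterate, and lines 6--7 re-orthonormalize the result by replacing it with the orthogonal factor of its QR decomposition. Taken together, these are precisely one step of the orthogonal/subspace iteration recurrence on $B$. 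Since $B$ and $A$ share eigenvectors and $\lambda_i(B) = \mu + \sigma_i(M)^2$, this iteration converges (given a top-$k$ spectral gap and a generic initialization) to an orthonormal basis of the top-$k$ right-singular-vector subspace of $M$.

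For the extraction step, I would invoke the standard fact that for a symmetric matrix with distinct dominant eigenvalues, the upper-triangular factor $R$ in orthogonal iteration converges to the diagonal matrix of those eigenvalues (under the positive-diagonal convention for QR). Thus at convergence $R_{ii} \to \mu + \sigma_i(M)^2$, so $\sqrt{\operatorname{diag}(R - \mu I)} \to \sigma_i(M)$ as required by line 9, and $X = Q$ holds the corresponding right singular vectors.

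The only step with real content is the gradient identification in line 4; everything after it is bookkeeping plus a citation to the standard convergence theorem (e.g., Saad Ch.~5). A small subtlety worth making explicit is that $f$ was defined on vectors while the algorithm applies it to a matrix $X$: by interpreting $f$ column-wise, the bias $b$ cancels uniformly in $f(X) - f(0)$ and the squared norm decouples across columns, so auto-differentiation returns the same linear map $M^\top M$ on every column, yielding the matrix product $AX$ in one backward pass.
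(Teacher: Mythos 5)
Your proof is correct and follows essentially the same route as the paper's: identify the gradient in line 4 as $M^\top M X$, recognize lines 5--7 as one step of subspace iteration on the shifted operator $M^\top M + \mu I$, and undo the shift on the diagonal of $R$ to recover the squared singular values. Your added remarks on the column-wise interpretation of $f$ and the spectral-gap condition for convergence are slightly more explicit than the paper's treatment but do not change the argument.
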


 Note that by subtracting the value of $f$ at $0$ in line $4$, we remove the bias term from the affine function, which does not affect the singular values and vectors. This technique works even for the composition of multiple affine functions (e.g., concatenation of implicitly linear layers).
 The convergence rate in the subspace iteration algorithm depends on the initial matrix $X$ (line $1$), and if there are vectors $s_i$ in the column space of $X$ for which $\|s_i - v_i\|_2$ is small, then it takes fewer steps for the $i$-th eigenvector to converge. Meanwhile, we know that SGD makes small updates to $W$ in each training step. Therefore, by reusing the matrix $V$ in line $10$ (top-$k$ right singular vectors) computed for $W_{i-1}$ as initial $X$ for computing the spectrum of $W_i$, we can get much faster convergence. Our experiments show that by using this technique, it is enough to perform only one iteration of PowerQR per SGD step to converge to the spectrum of $W$ after a few steps. Using a warm start (performing more iterations of PowerQR for the initial $W$) helps to correctly follow the top-$k$ singular values during the training. The prior work on estimating and clipping the singular values has exploited these small SGD updates for the parameters in a similar manner~\citep{farnia2018generalizable,gouk2021regularisation,senderovich2022towards}.

\subsection{Clipping the Spectral Norm}
\label{sec-clipping}

In this section, we introduce our algorithm for clipping the spectral norm of an implicitly linear layer to a specific target value. To project a linear operator $M = USV^\top$ to the space of operators with a bounded spectral norm of $c$, it is enough to construct $A^\prime = US^\prime V^\top$, where $S^\prime_{i,i} = \min (S_{i,i}, c)$~\citep{lefkimmiatis2013hessian}. Therefore, for this projection, we will not need to extract and modify the whole spectrum of $A$ (as in~\citep{sedghi2018singular,senderovich2022towards}), and only clipping the singular values that are larger than $c$ to be exactly $c$ would be enough. Note that after computing the largest spectral norm (e.g., using the PowerQR method) by simply dividing the parameters of the affine model by the largest singular value, the desired bound on the spectral norm would be achieved, and that is the basis of some prior work~\citep{miyato2018spectral,farnia2018generalizable,gouk2021regularisation}; however, this procedure scales the whole spectrum rather than projecting it to a norm ball. Therefore, it might result in suboptimal optimization of the network due to replacing the layer with one in the norm ball that behaves very differently.
This adverse effect can be seen in~\Cref{tab:bn-cifar} as well. Thus, to resolve both aforementioned issues, our algorithm iteratively shrinks the largest singular value by substituting the corresponding rank $1$ subspace in an implicit manner. 


\begin{algorithm}[t]
\caption{Clip ($f_W, c,  N, P$)}\label{alg:clip}
\begin{algorithmic}[1]
\State {\bfseries Input:} Affine function $f_W= M_W x + b$, Clip value $c$, Number of iterations $N$, Learning rate $\lambda$, Number of iterations of PowerQR $P$
\State {\bfseries Output:}  Affine function $f_{W^\prime}$ with singular values clipped to $c$
\State $\sigma_1, v_1 \gets \mathrm{PowerQR}(f_{W}, v, P)$ \small{($v$: Random vector)}
    \While{$\sigma_1 > c$}
    \State $W^\prime \gets W$ \Comment{$W^\prime = \sum_{i=1}^{n} u_i \sigma_i v_i^\top$}
        \For{$i=1$ {\bfseries to} $N$}
            \State $W^\prime_\delta \gets \nabla_{W^\prime} \frac{1}{2} \|f_{W^\prime}(v_1) - f_{W}(c \sigma_1^{-1} v_1)\|^2$ \\ \Comment{ $W^\prime_\delta = u_1 (\sigma_1 - c)v_1^\top$}
            \State $W^\prime \gets W^\prime - \lambda W^\prime_\delta$ 
        \EndFor
        \State $W \gets W^\prime$
        \State $\sigma_1, v_1 \gets \mathrm{PowerQR}(f_{W}, v, P)$ \small{($v$: Random vector)} \Comment{Update $\sigma_1$ and $v_1$}
    \EndWhile

\State {\bfseries Return} $f_{W^\prime}, \sigma_1, v_1$

\end{algorithmic}
\end{algorithm}

\Cref{alg:clip} shows our stand-alone clipping method. The outer \texttt{while} loop clips the singular values of the linear operator one by one. After clipping each singular value, the PowerQR method in line \texttt{11} computes the new largest singular value and vector, and the next iteration of the loop performs the clipping on them. The clipping of a singular value is done by the \texttt{for} loop. Considering that in the \texttt{for} loop $M:= M_{W} = M_{W^\prime} = \sum_{i=1}^{n} u_i \sigma_i v_i^\top$, since $v_i$s are orthogonal and $v_i^\top v_i = 1$, for line \texttt{7} we have:
\vspace{-1 pt}
\begin{align*}
    W^\prime_\delta &= \nabla_{W^\prime} \frac{1}{2} \|f_{W^\prime}(v_1) - f_{W}(c \sigma_1^{-1} v_1) \|^2 \\
    &=  \left(f_{W^\prime}(v_1) - f_{W}(c \sigma_1^{-1} v_1)\right) \nabla_{W^\prime} f_{W^\prime}(v_1) \\
    &= \left(Mv_1 + b - c\sigma_1^{-1}Mv_1 - b \right) v_1^\top \\
    &= \left(u_1 \sigma_1 - u_1 c \right) v_1^\top = u_1 \sigma_1 v_1^\top - u_1 c v_1^\top,
\end{align*}
\noindent where $\nabla_{W^\prime} f_{W^\prime}(v_1) = v_1^\top$ because it is as if we are computing the gradient of the linear operator with respect to its transformation matrix. Therefore, in line \texttt{7} if $\lambda = 1$ we compute the new transformation matrix as $\sum_{i=1}^{n} u_i \sigma_i v_i^\top - u_1 \sigma_1 v_1^\top + u_1 c v_1^\top = u_1 c v_1^\top + \sum_{i=2}^{n} u_i \sigma_i v_i^\top$. Notice that $W^\prime_\delta$ is in the same format as the parameters of the linear operator (e.g., convolutional filter in convolutional layers). 

If we set $\lambda=1$, the largest singular value will be clipped to the desired value $c$, without requiring the \texttt{for} loop. That is indeed the case for dense layers. The reason that we let $\lambda$ be a parameter and use the \texttt{for} loop is that for convolutional layers, we noticed that a slightly lower value of $\lambda$ is required for the algorithm to work stably. The \texttt{for} loop allows this convergence to singular value $c$.




To derive our fast and precise clipping method, \textit{FastClip}, we intertwine~\Cref{alg:powerqr} and~\Cref{alg:clip} with the outer SGD iterations used for training the model, as shown in~\Cref{alg:fastclip}.
As we mentioned in~\Cref{sec-extraction}, the PowerQR method with warm start is able to track the largest singular values and corresponding vectors by running as few as one iteration per SGD step (lines $4$ and $8$ in~\Cref{alg:fastclip}). Whenever the clipping method is called, we use this value and its corresponding vector as additional inputs to~\Cref{alg:clip} (rather than line $3$ in~\Cref{alg:clip}). By performing this clipping every few iterations of SGD, since the number of calls to this method becomes large and the changes to the weight matrix are slow, we do not need to run its \texttt{while} loop many times. Our experiments showed performing the clipping method every $100$ steps and using only $1$ iteration of \texttt{while} and \texttt{for} loops is enough for clipping the trained models (lines $9$ and $10$). Because after the clipping, the corresponding singular value of $v$ has shrunk, we need to perform a few iterations of PowerQR on a new randomly chosen vector (since $v$ is orthogonal to the other right singular vectors) to find the new largest singular value and corresponding right singular vector, and line $11$ of~\Cref{alg:clip} takes care of this task. Obviously, the constants used in our algorithms are hyperparameters, but we did not tune them to find the best ones and the ones shown in this algorithm are what we used in all of our experiments. Also, any optimizer can be used for the SGD updates in line $7$ (e.g., Adam~\citep{kingma2014adam}).

\begin{algorithm}[tb]
\caption{FastClip ($f_W, X, c, N, \eta$)}\label{alg:fastclip}
\begin{algorithmic}[1]
\State {\bfseries Input:} Affine function $f_W$, Dataset $X$, Clip value $c$, Number of SGD iterations $N$, Learning rate $\eta$
\State {\bfseries Output:}  Trained affine function $f_{W^\prime}$ with 
singular values clipped to $c$
\State $v \gets \mathrm{Random\,input\,vector} $
\State $\sigma_1, v_1 \gets \mathrm{PowerQR} (f_W, v_1, 10)$
    \For{$i=1$ {\bfseries to} $N$}
        \State $X_b \gets \mathrm{SampleFrom} (X) $ \Comment{Sample a batch}
        \State $W = W - \eta \nabla_W \ell(f_W(X_b))$ \Comment{SGD step}
        \State $\sigma_1, v_1 \gets \mathrm{PowerQR} (f_W, v_1, 1)$
        \If{$ i \,\,\mathrm{isDivisibleBy} \,\,100$}
            \State $f_W, \sigma_1, v_1 \gets \mathrm{Clip} (f_W, \sigma_1, v_1, c, 1, 10)$
        \EndIf
    \EndFor

\State {\bfseries Return} $f_W$

\end{algorithmic}
\end{algorithm}

\subsection{Limitations of Convolutional Layers}
\label{sec-limitations}

In this section, we shed light on the formerly overlooked limitation of convolutional layers to represent any arbitrary spectrum. In some prior work, the proposed method for clipping computes the whole spectrum, clips the spectral norm, and then tries to form a new convolutional layer with the new spectrum~\citep{sedghi2018singular,senderovich2022towards}. We also introduce a new simple optimization method that uses our PowerQR algorithm and adheres to this procedure in~\Cref{sec-modification}. In the following, we start with a simple example that shows an issue with this procedure, and then, present our theoretical results that show a more general and fundamental limitation for a family of convolutional layers.

Consider a 2d convolutional layer whose kernel is $1\times 1$ with a value of $c$. Applying this kernel to any input of size $n\times n$ scales the values of the input by $c$. The equivalent matrix form of this linear transformation is an $n\times n$ identity matrix scaled by $c$. We know that this matrix has a rank of $n$, and all the singular values are equal to $1$. Therefore, if the new spectrum, $S^\prime$, does not represent a full-rank transformation, or if its singular values are not all equal, we cannot find a convolutional layer with a $1\times 1$ kernel with $S^\prime$ as its spectrum.


In the following theorem, we compute the closed form of the singular values of convolutional layers with circular padding, and in Remark~\ref{remark-duplicate}, we mention one of the general limitations that it entails.

\begin{theorem}
\label{theorem-limitations}
For a convolutional layer with $1$ input channel and $m$ output channels (the same result holds for convolutions with $1$ output channel and $m$ input channels) and circular padding applied to an input which in its vectorized form has a length of $n$, if the vectorized form of the $l$-th channel of the filter is given by $\textbf{f}^\textbf{(l)} = [f_0^{(l)}, f_1^{(l)}, \dots, f_{k-1}^{(l)}]$, the singular values are:
\vspace{-3 pt}
\begin{align}
    \mathcal{S}(\omega) = \left\{ \sqrt{\sum_{l=1}^m \mathcal{S}_j^{(l)^2}(\omega)}, \, j\in[n-1] \right\},
    \label{equ-eigval_2}
\end{align}
\vspace{-3pt}
\noindent where for $l \in {1,\dots, m}$:
\vspace{-3 pt}
\begin{align*}
    \mathcal{S}^{(l)}(\omega) = \left[\sqrt{c_0^{(l)} + 2\sum_i^{k-1} c_i^{(l)} \Re(\omega^{j\times i})},\, j\in [n-1] \right]^\top
    \label{equ-eigval}
\end{align*}
\vspace{-3pt}
\noindent in which $c_i^{(l)}$'s are defined as:
\begin{align*}
    c_0^{(l)} &:= f_0^{(l)^2} + f_1^{(l)^2} + \dots + f_{k-1}^{(l)^2},\\
    c_1^{(l)} &:= f_0^{(l)} f_1^{(l)} + f_1^{(l)} f_2^{(l)} + \dots + f_{k-2}^{(l)} f_{k-1}^{(l)},\\
    &\vdots\\
    c_{k-1}^{(l)} &:= f_0^{(l)} f_{k-1}^{(l)}.
\end{align*}

\label{pro-eigval_2}
\end{theorem}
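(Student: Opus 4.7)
The plan is to exploit the well-known fact that convolutions with circular padding correspond to (block) circulant matrices, which are simultaneously diagonalized by the discrete Fourier transform. I would organize the argument in four steps.

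First, I would set up the matrix form. When the $l$-th output channel is produced by circular convolution of the filter $\mathbf{f}^{(l)}$ against the vectorized input of length $n$, the action of that channel is a circulant matrix $C_l\in\R^{n\times n}$ whose first row is $\mathbf{f}^{(l)}$ padded with zeros (followed by the appropriate circular shift, depending on indexing convention). Stacking the $m$ output channels gives the full operator $M = [C_1^\top \, C_2^\top \, \cdots \, C_m^\top]^\top \in \R^{mn\times n}$. The dual case of $m$ input channels and one output channel gives a horizontal stack $[C_1\,\cdots\,C_m]$, and it has the same nonzero singular values as $M$ since $MM^\top$ and this stack's analogue of $M^\top M$ agree in structure; so it suffices to handle the first case.

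Next, I would diagonalize. Let $F\in\sC^{n\times n}$ be the unitary DFT matrix, so $F_{jk}=\omega^{jk}/\sqrt{n}$. Every circulant matrix satisfies $C_l = F^* D_l F$ with $D_l=\mathrm{diag}\bigl(\hat{\mathbf{f}}^{(l)}_0,\dots,\hat{\mathbf{f}}^{(l)}_{n-1}\bigr)$, where $\hat{\mathbf{f}}^{(l)}_j = \sum_{i=0}^{k-1} f_i^{(l)}\omega^{ji}$ is the DFT of $\mathbf{f}^{(l)}$ (zero-padded to length $n$). Then
\begin{equation*}
M^\top M \;=\; \sum_{l=1}^m C_l^* C_l \;=\; F^*\Bigl(\sum_{l=1}^m |D_l|^2\Bigr) F,
\end{equation*}
so the squared singular values of $M$ are precisely the diagonal entries $\sum_{l=1}^m |\hat{\mathbf{f}}^{(l)}_j|^2$ for $j\in[n-1]$. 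Taking the square root will yield the outer formula \eqref{equ-eigval_2} once the inner quantity is identified as $\mathcal{S}_j^{(l)}(\omega)^2$.

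Third, I would identify $|\hat{\mathbf{f}}^{(l)}_j|^2$ with the claimed real expression via the finite Wiener–Khinchin identity. Expanding the squared modulus,
\begin{equation*}
|\hat{\mathbf{f}}^{(l)}_j|^2 \;=\; \sum_{i,i'=0}^{k-1} f_i^{(l)} f_{i'}^{(l)} \omega^{j(i-i')},
\end{equation*}
and grouping by the lag $d=i-i'$ gives $c_0^{(l)} + \sum_{d=1}^{k-1} c_d^{(l)}(\omega^{jd}+\omega^{-jd}) = c_0^{(l)} + 2\sum_{d=1}^{k-1} c_d^{(l)}\Re(\omega^{jd})$, where the $c_d^{(l)}$ are the autocorrelation coefficients defined in the statement. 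This is precisely the square of $\mathcal{S}_j^{(l)}(\omega)$, closing the loop.

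The main obstacle I anticipate is not the linear algebra but the bookkeeping around the precise form of the circulant representation: the convention on indexing, the direction of the shift, and making the "vectorized form has length $n$" hypothesis sufficiently explicit so that the DFT diagonalization applies verbatim. Once the circulant identification is pinned down, the rest is a clean computation using unitarity of $F$ and a reindexing of the double sum; the dual case (single output channel, many input channels) then follows from the symmetry $\sigma_i(M)=\sigma_i(M^\top)$ applied to the horizontal-stack version.
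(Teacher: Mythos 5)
Your proposal is correct and follows essentially the same route as the paper: both arguments rest on the circulant structure induced by circular padding, reduce the multi-channel case to $M^\top M=\sum_{l} C_l^\top C_l$, and recover the stated formula through the autocorrelation coefficients $c_i^{(l)}$ and the $n$-th roots of unity. The only cosmetic difference is the order of operations --- you diagonalize each $C_l$ by the DFT and then expand $|\hat{\mathbf{f}}^{(l)}_j|^2$ by lag, whereas the paper first identifies $C_l^\top C_l$ as a symmetric circulant whose first row is the autocorrelation sequence and then applies the circulant eigenvalue formula --- and these are the same identity read in opposite directions.
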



\begin{remark}
\label{remark-duplicate}
 Note that in the closed form, we only have the real parts of the roots of unity. So, for any non-real root of unity, we get a duplicate singular value because mirroring that root around the real axis (i.e., flipping the sign of the imaginary part) gives another root of unity with the same real component. Only the roots that are real might derive singular values that do not have duplicates. The only such roots with real parts are $1$ and $-1$ ($-1$ is a root only for even $n$). Therefore, except for at most two singular values, other ones always have duplicates. This shows a more general limitation in the representation power of convolutional layers in representing arbitrary spectrums.
\end{remark}




As~\Cref{pro-eigval_2} shows the singular values of convolutional layers with circular padding have a specific structure. This makes the singular values of the convolutional layers connected. This structure among the singular values shrinks the space of the linear transformations they can represent. One such limitation was mentioned in Remark~\ref{remark-duplicate}. Using this theorem, we also derive an easy-to-compute upper bound and lower bound based on the values of the filter in Corollary~\ref{cor_eig_2}, which become equalities when the filter values are all non-negative. 

One implication of this result is that the alternative approach for clipping the spectral norm of convolutional layers that assigns arbitrary values to all the singular values at once (in contrast to our iterative updates which do not leave the space of convolutional operators), which has been used in several prior works, cannot be correct and effective because convolutional layers cannot represent arbitrary spectrums.

\begin{figure*}[t]
\centering
\includegraphics[width=1.\linewidth]{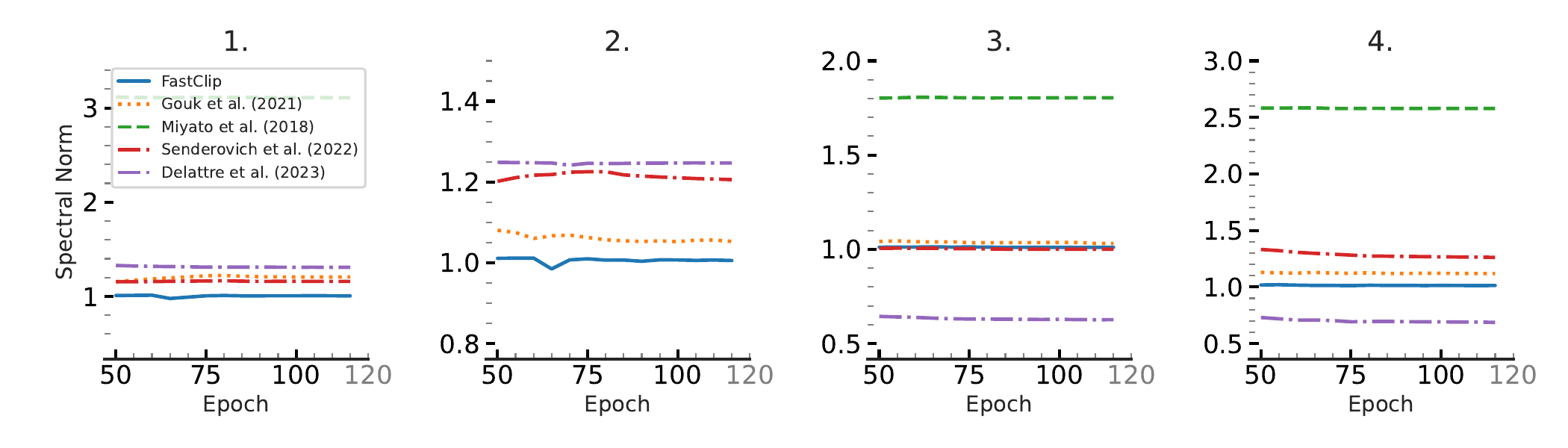}
\caption{Comparison of the clipping methods in a simple network with only one convolutional layer and one dense layer, where \textbf{the target value is $\pmb{1}$}. Our method is the only one that clips this layer correctly for all different settings: 1. Kernel of size $3$ with reflect padding, 2. Kernel of size $3$ with same padding, 3. Kernel of size $3$ and zeros padding with stride of $2$, and 4. Kernel of size $5$ with same replicate padding and stride of $2$.}
\label{fig:simple-clip-compare}
\end{figure*}

\subsection{Batch Normalization Layers}
\label{sec-bn}

Batch Normalization~\citep{ioffe2015batch} has proved to successfully stabilize and accelerate the training of deep neural networks and is thus by now standard in many architectures that contain convolutional layers. However, the adverse effect of these layers on the adversarial robustness of models has been noted in previous research~\cite{xie2019intriguing,benz2021revisiting,galloway2019batch}. As we will show in our experiments, not controlling the spectral norm of the batch normalization layers might forfeit the benefits of merely controlling the spectral norm of convolutional layers. We also point out an interesting compensation behavior that the batch normalization layer exhibits when the spectral norm of the convolutional layer is clipped; As the clipping value gets smaller, the spectral norm of the batch normalization layers increases (see~\Cref{fig-compensation}).

The clipping method introduced by~\citet{gouk2021regularisation} (explained in~\Cref{apx-alg-bn}), which is also used by the follow-up works~\citep{senderovich2022towards,delattre2023efficient}, performs the clipping of the batch norm layer separately from the preceding convolutional layer, and therefore upper-bounds the Lipschitz constant of their concatenation by the multiplication of their individual spectral norms. This upper-bound, although correct, is not tight, and the actual Lipschitz constant of the concatenation might be much smaller, which might lead to unwanted constraining of the concatenation such that it hurts the optimization of the model. Also, the purpose of the batch normalization layer is to control the behavior of its preceding convolutional layer, and therefore, clipping it separately does not seem to be the best option. As explained in~\Cref{sec-clipping}, our clipping method, can be applied directly to the concatenation of the convolutional layer and the batch normalization layer. This way, we can control the spectral norm of the concatenation without tweaking the batch normalization layer separately. We will show in our experiments that this method can be effective in increasing the robustness of the model, without compromising its accuracy.

\section{EXPERIMENTS}
\label{sec-experiments}

 
 We perform various experiments to show the effectiveness of our algorithms and compare them to the state-of-the-art methods to show their advantages. 
 Since the method introduced by~\citet{senderovich2022towards} extends the method of~\citet{sedghi2018singular} and~\citep{farnia2018generalizable} use the same method as~\citep{gouk2021regularisation}, the methods we use in our comparisons are the methods introduced by~\citet{miyato2018spectral,gouk2021regularisation,senderovich2022towards,delattre2023efficient}. For all these methods we used the settings recommended by the authors in all experiments. Implementations of our methods and experiments can be found in the supplementary material. All the experiments were performed on a single node with an NVIDIA A40 GPU. Some details of our experiments along with further results are presented in~\Cref{apx-exp}.

\subsection{PowerQR}
\label{sec-powerqr}

As pointed out previously, the methods introduced by~\citet{senderovich2022towards} and~\citet{delattre2023efficient}, compute the exact spectral norm only when circular padding is used for the convolutional layers. For the other types of common paddings for these layers, these methods can result in large errors. To show this, we computed the average absolute value of the difference in their computed values and the correct value for a $100$ randomly generated convolutional filters for different types of padding (see~\Cref{apx-exp-powerqr}). As the results show, the error dramatically increases as the number of channels increases. Also, the method introduced by~\citet{delattre2023efficient} assumes a stride of $1$ and therefore leads to even larger errors when a stride of $2$ is used. 
We also show that PowerQR is much more efficient than $k$ successive runs of the power method followed by deflation (as suggested in~\citet{virmaux2018lipschitz}) for extracting the top-$k$ singular values in~\Cref{apx-multi}. We also have utilized the capability of our method to extract the spectral norm of the concatenation of multiple implicitly linear layers for analyzing the spectral norm of the concatenation of convolutional and batch norm layers in~\Cref{fig:resnet18-spectral-norm} (see~\Cref{fig-resnet-catclip} for more use-cases).

\begin{figure*}[t]
\centering
\begin{subfigure}{.7\textwidth}
  \centering
  \includegraphics[width=.99\linewidth]{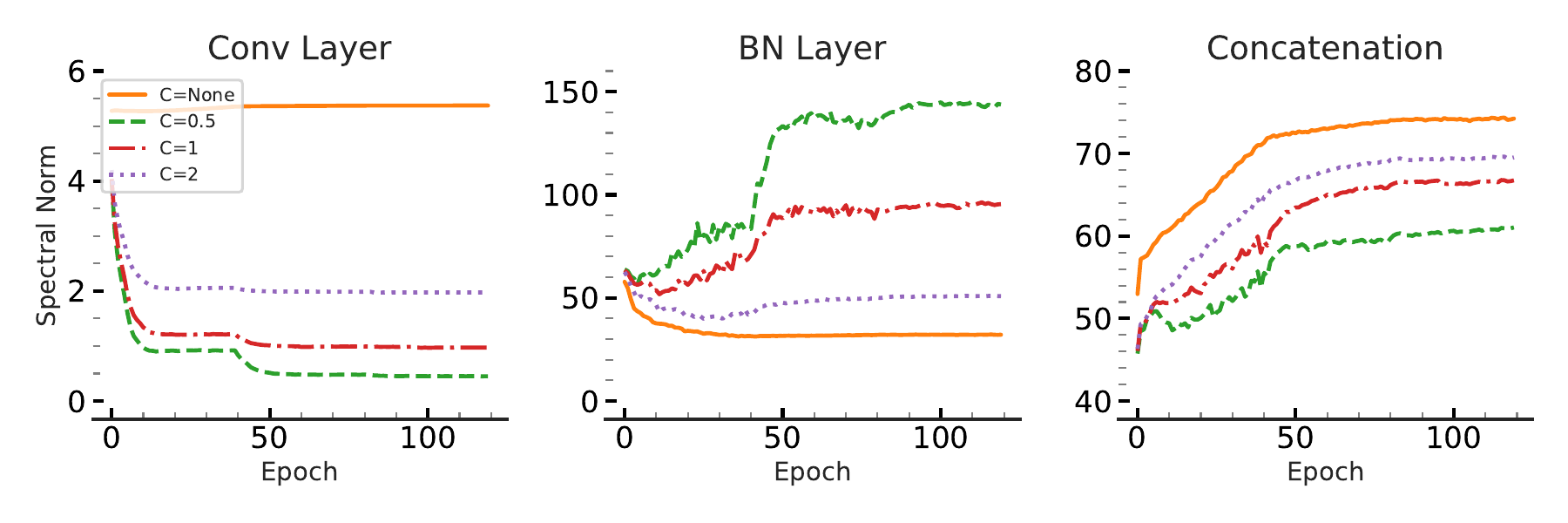}
  \caption{}
  \label{fig-compensation}
\end{subfigure}%
\hfill
\begin{subfigure}{.29\textwidth}
  \centering
  \includegraphics[width=.99\linewidth]{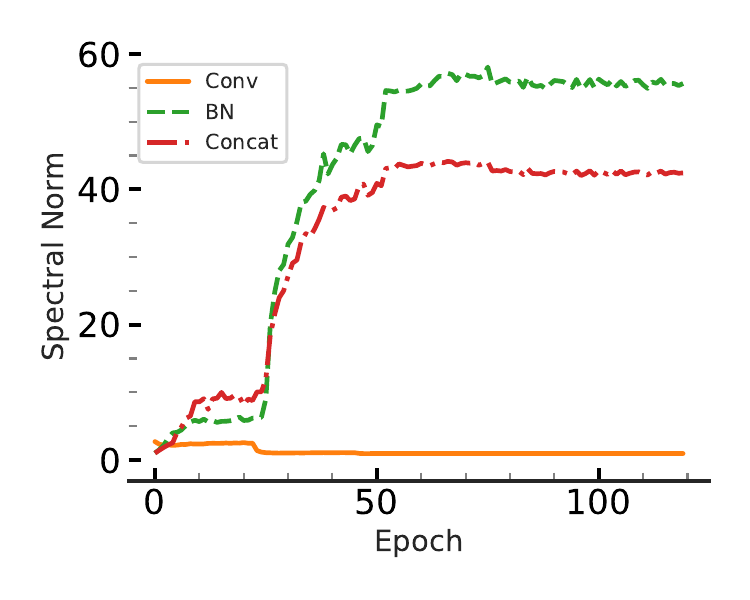}
  \caption{}
  \label{fig-resnet-concat}
\end{subfigure}
\caption{\textbf{(a)} The first three plots show the clipping of the convolutional layer in a simple two-layer network to various values on MNIST. As the clipping target gets smaller, the spectral norm of the batch norm layer compensates and becomes larger. Meanwhile, the spectral norm of their concatenation slightly decreases. \textbf{(b)} The right-most plot shows the spectral norm of a convolutional layer, its succeeding batch norm layer, and their concatenation from the clipped ResNet-18 model trained on CIFAR-10. Although the convolutional layer is clipped to $1$, the spectral norm of the concatenation is much larger due to the presence of the batch norm layer.}
\label{fig:resnet18-spectral-norm}
\end{figure*}

\subsection{Clipping Method}
\label{sec-experiment-clipping}

We start by comparing the correctness of the clipping models for different types of convolutional layers. Then, we show the effectiveness of each of these methods on the generalization and robustness of ResNet-18~\citep{he2016deep}, which is the same model used in the experiments of prior work~\citep{gouk2021regularisation,senderovich2022towards,delattre2023efficient} and Deep Layer Aggregation (DLA)~\citep{yu2018deep} model, which is more complex with more layers and parameters. We train the models on the same datasets used in prior works, MNIST~\citep{lecun1998mnist} and CIFAR-10~\citep{krizhevsky2009learning}.




\begin{figure}[t]
\centering
\includegraphics[width=1.\linewidth]{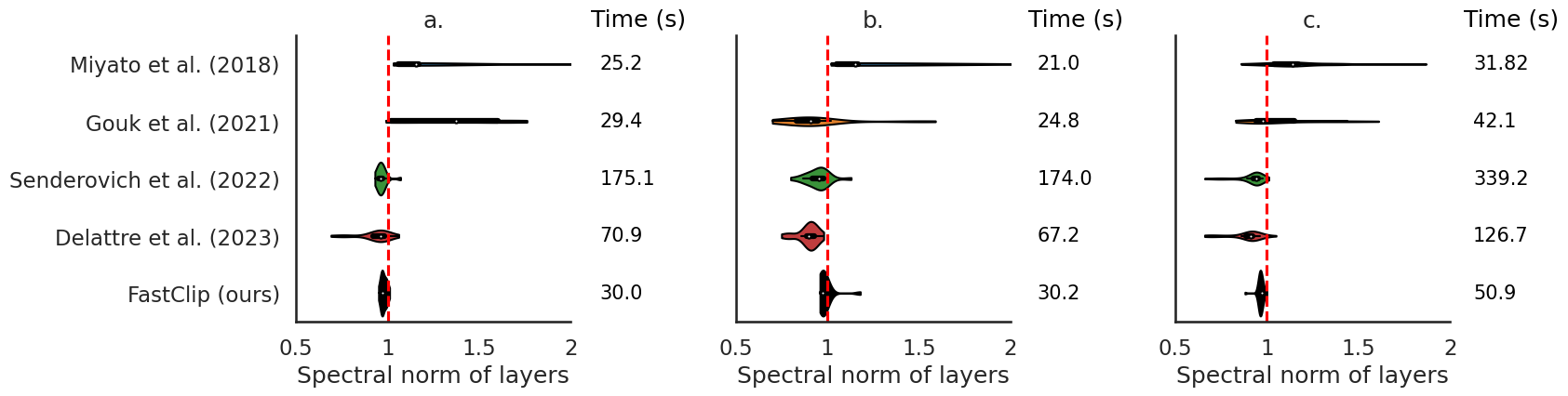}
\caption{The layer-wise spectral norm of a ResNet-18 model trained on CIFAR-10 (\textbf{a}) and MNIST (\textbf{b}) using each of the clipping methods. The time columns shows the training time per epoch for these methods. \textbf{c.} The layer-wise spectral norm of a DLA model trained on CIFAR-10 using each of the clipping methods. The time column shows the training time per epoch for these methods. As all of the plots show, by using our method, all the layers have a spectral norm very close to the \textbf{target value $\pmb{1}$}. Our method is also much faster than the relatively accurate alternatives and shows a slower increase in running time as the model gets larger.} 
\label{fig:resnet-clip-compare}
\end{figure}

\begin{table*}[t!]
\caption{The accuracy on the test set and adversarial examples generated with PGD-$(50)$ and CW for ResNet-18 trained on MNIST and CIFAR-10, for the models with their convolutional and dense layers clipped to $1$ (\texttt{With BN}) and clipped models with their batch norm layers removed (\texttt{No BN}). The accuracy of the original model on the test set, PGD-generated examples, and CW-generated examples for MNIST are $\pmb{99.37 \pm 0.02}$, $\pmb{15.30 \pm 9.11}$, and $\pmb{77.46 \pm 5.89}$, respectively. For CIFAR-10, these values are $\pmb{94.77 \pm 0.19}$, $\pmb{22.15 \pm 0.54}$, and $\pmb{13.85 \pm 0.74}$.}
\label{tab:bn-cifar}
\vskip 0.15in
\begin{center}
\begin{small}
\begin{sc}
\begin{tabular}{@{} l  c c  c  c  c c  c @{}}
 \toprule
 
 & \multicolumn{2}{@{}c}{\textbf{MNIST}} & \multicolumn{2}{@{}c}{\textbf{CIFAR-10}} \\\addlinespace[0.3em]

  \textbf{Method} &  With BN & No BN
 & With BN & No BN  \\\addlinespace[0.3em]

    \cmidrule(r){2-3}
    \cmidrule(r){4-5}

 & \multicolumn{4}{@{}c}{Accuracy on the test set}   \\ \addlinespace[0.4em]
 
 \cite{miyato2018spectral} &  $99.40 \pm 0.06$ & $98.00 \pm 0.22$ 
 & $94.82 \pm 0.11$ & $88.83 \pm 1.41$  \\\addlinespace[0.3em]

 \cite{gouk2021regularisation} & $99.25 \pm 0.04$ &  $21.94 \pm 6.01$ 
 & $89.98 \pm 0.38$ & $19.80 \pm 5.55$ \\\addlinespace[0.3em]

 \cite{senderovich2022towards} &  $99.40 \pm 0.03$ & $62.63 \pm 24.01$ 
 & $94.19 \pm 0.13$  &  $68.29 \pm 10.63$ \\\addlinespace[0.3em]

\cite{delattre2023efficient} &  $99.29 \pm 0.05$ & $97.27 \pm 0.03$ 
& $93.17 \pm 0.13$ &  $39.35 \pm 9.84$   \\\addlinespace[0.3em]


FastClip~(\Cref{alg:fastclip}) & $\pmb{99.41 \pm 0.04}$ & $\pmb{99.31 \pm 0.02}$      
& $\pmb{95.28 \pm 0.07}$ & $\pmb{92.08 \pm 0.28}$ \\\addlinespace[0.4em]


    & \multicolumn{4}{@{}c}{Accuracy on samples from PGD attack} 
        & \\\addlinespace[0.4em]
 
  \cite{miyato2018spectral} &  $21.77 \pm 12.98$ &  $32.67 \pm 14.08$  
  & $23.48 \pm 0.11$  & $35.18 \pm 7.72$  \\\addlinespace[0.3em]

  \cite{gouk2021regularisation} & $2.40 \pm 2.94$  & $8.41 \pm 3.03$  
  &  $16.13 \pm 1.28$  & $14.66 \pm 3.99$ \\\addlinespace[0.3em]

   \cite{senderovich2022towards} & $30.99 \pm 9.28$ & $15.97 \pm 4.84$ 
   &  $21.74 \pm 0.72$ & $39.84 \pm 7.87$  \\\addlinespace[0.3em]

\cite{delattre2023efficient} &  $30.87 \pm 4.77$  &  $71.75 \pm 1.49$ 
&  $21.08 \pm 0.84$ & $16.22 \pm 3.17$  \\\addlinespace[0.3em]


   FastClip~(\Cref{alg:fastclip}) &  $\pmb{47.90 \pm 5.49}$ & $\pmb{78.50 \pm 2.85}$  
    & $\pmb{24.48 \pm 0.32}$ & $\pmb{41.37 \pm 0.95}$ \\\addlinespace[0.4em]

     & \multicolumn{4}{@{}c}{Accuracy on samples from CW attack} 
         & \\\addlinespace[0.4em]
  
   \cite{miyato2018spectral} &  $86.25 \pm 2.18$ &  $73.56 \pm 10.38$  
   & $16.68 \pm 0.95$  & $48.48 \pm 6.40$  \\\addlinespace[0.3em]
 
   \cite{gouk2021regularisation} & $66.59 \pm 21.91$  & $21.94 \pm 6.01$  
   &  $18.79 \pm 2.99$  & $12.63 \pm 4.33$ \\\addlinespace[0.3em]
 
    \cite{senderovich2022towards} & $87.72 \pm 2.75$ & $58.71 \pm 20.67$ 
    &  $20.53 \pm 0.77$ & $43.82 \pm 9.57$  \\\addlinespace[0.3em]
 
 \cite{delattre2023efficient} &  $83.97 \pm 1.79$  &  $\pmb{96.93 \pm 0.06}$ 
 &  $24.05 \pm 1.72$ & $11.92 \pm 5.34$  \\\addlinespace[0.3em]
 
 
    FastClip~(\Cref{alg:fastclip}) &  $\pmb{90.21 \pm 1.80}$ & $95.35 \pm 1.06$  
     & $\pmb{24.31 \pm 0.96}$ & $\pmb{56.28 \pm 0.96}$ \\\addlinespace[0.4em]

\bottomrule
\end{tabular}
\end{sc}
\end{small}
\end{center}
\vskip -0.1in
\end{table*}

\subsubsection{Precision and Efficiency}
\label{subsec-comp-others}

We use a simple model with one convolutional layer and one dense layer and use each of the clipping methods on the convolutional layer while the model is being trained on MNIST and the target clipping value is $1$. We compute the true spectral norm after each epoch.~\Cref{fig:simple-clip-compare} shows the results of this experiment for $4$ convolutional layers with different settings (e.g., kernel size and padding type). This figure shows our method is the only one that correctly clips various convolutional layers.



In~\Cref{fig:resnet-clip-compare}a, we evaluated the distribution of the layer-wise spectral norms of the ResNet-18 models trained on CIFAR-10 using each of the clipping methods (for which the performance can also be found in~\Cref{tab:bn-cifar}) and showed our method is the most precise one, while being much more efficient than the relatively accurate alternatives~\citep{senderovich2022towards,delattre2023efficient}. To further evaluate the precision of our clipping method in comparison to the other methods, we also did the same experiment with the ResNet-18 model trained on the MNIST dataset. As~\Cref{fig:resnet-clip-compare}b shows the precision results are similar to the plot we had for CIFAR-10, with the times per epoch slightly smaller for all the methods due to the smaller input size. To check if the same precision results hold for other models and how the running time changes for larger models, we performed the same experiment for the DLA models that are trained using each of the clipping methods (for which the performance on the test set and adversarial examples can be found in~\Cref{tab:dla-results}). As~\Cref{fig:resnet-clip-compare}c shows, our method is still the most precise one among the clipping methods, while still being much faster than the more accurate alternatives. Another interesting point is that the factor by which the running times have increased for the larger model is smaller for our method compared to the methods introduced by~\citet{senderovich2022towards} and~\citet{delattre2023efficient}.

\subsubsection{Generalization and Robustness}
\label{sec-generalization}

Since the Lipschitz constant of a network may be upper-bounded by multiplying together the spectral norms of its constituent dense and convolutional layers, it is intuitive that regularizing per-layer spectral norms improves model generalization~\citep{bartlett2017spectrally}, and also adversarial robustness~\citep{szegedy2013intriguing}.
Therefore, we tested all the clipping models by using them during the training and computing the accuracy of the corresponding models on the test set and adversarial examples generated by two common adversarial attacks, Projected Gradient Descent (PGD)~\citep{madry2017towards} and Carlini \& Wanger Attack (CW)~\citep{carlini2017towards}.  

For the ResNet-18 model, we used the same variant used in prior work~\citep{gouk2021regularisation,senderovich2022towards,delattre2023efficient}. This variant divides the sum of the residual connection and convolutional output by $2$ before passing that as input to the next layer. This will make the whole layer $1$-Lipschitz if the convolutional layer is $1$-Lipschitz (the residual connection is $1$-Lipschitz). As~\Cref{tab:bn-cifar} shows, our clipping method leads to the best improvement in test accuracy while making the models more robust to adversarial attacks. The reason for the lack of the expected boost in the robustness of the models when clipping their spectral norms is shown in~\Cref{fig-resnet-concat}. This figure shows that although the models are clipped, the concatenation of some convolutional layers with batch normalization layers forms linear operators with large spectral norms. As~\Cref{fig-compensation} suggests, clipping the convolutional layer to smaller values will further increase the spectral norm of the batch norm layer. Still, as this figure suggests, there might be an overall decrease in the spectral norm of their concatenation which causes the slight improvement in the robustness of the clipped models. Therefore, we also trained a version of the model with all the batch norm layers removed. As~\Cref{tab:bn-cifar} shows, this leads to a huge improvement in the robustness of the clipped models; however, the clipped models achieve worse accuracy on the test set. 




\begin{figure*}[t]
\centering
\begin{subfigure}{.7\textwidth}
  \centering
  \includegraphics[width=.99\linewidth]{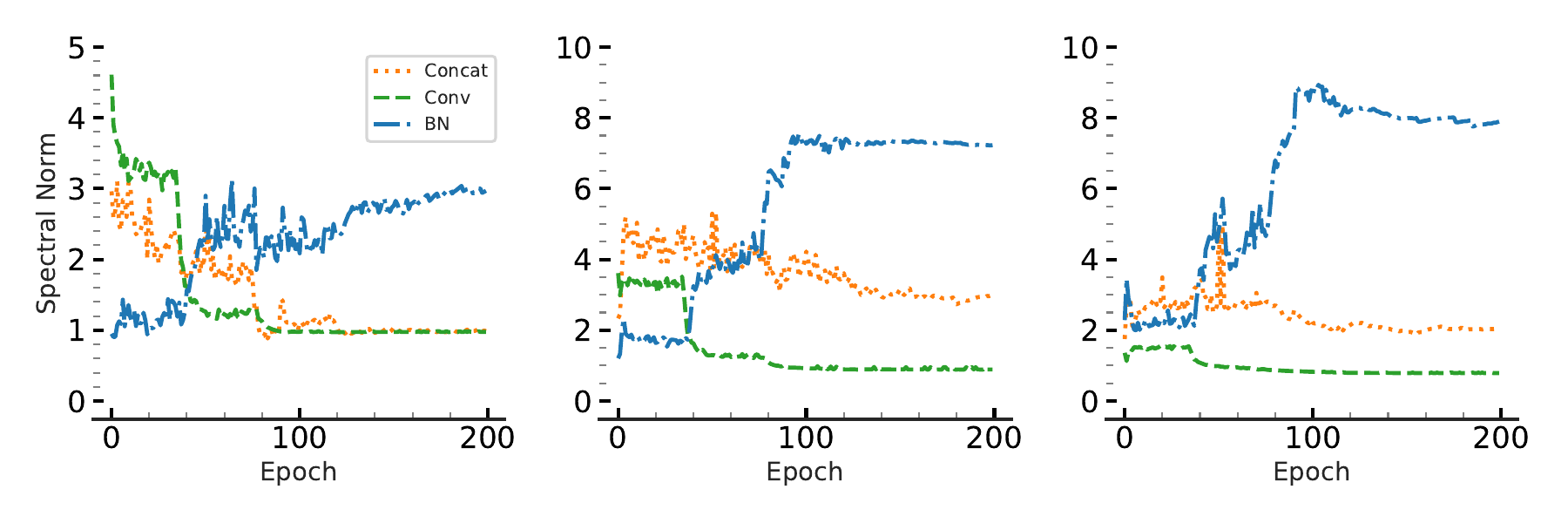}
  \caption{}
  \label{fig-resnet-catclip}
\end{subfigure}%
\hfill
\begin{subfigure}{.29\textwidth}
  \centering
  \includegraphics[width=.99\linewidth]{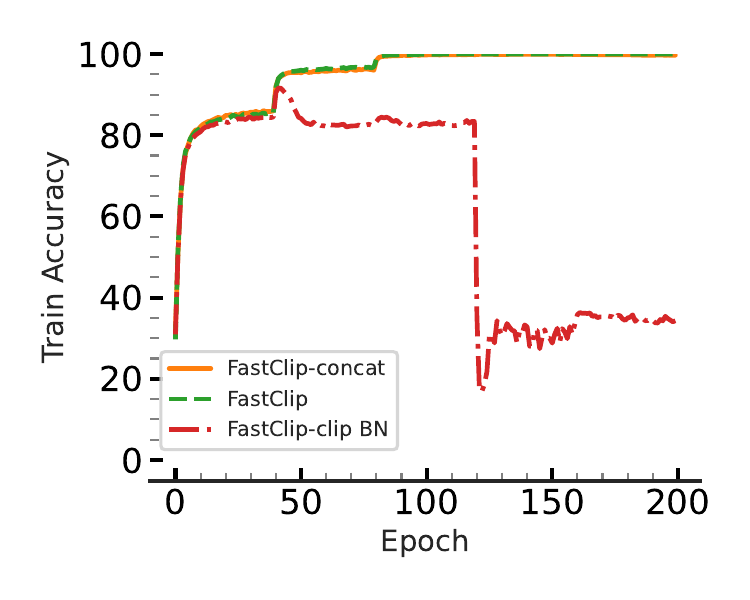}
  \caption{}
  \label{fig-resnet-bnclip-train-acc}
\end{subfigure}
\caption{\textbf{(a)} Each of these three subplots shows the spectral norms of a convolutional layer, its succeeding batch norm layer, and their concatenation in a ResNet-18 model trained on CIFAR-10. The convolutional layers in this model are clipped to $1$. Instead of clipping the batch normalization layer, our method has been applied to the concatenation to control its spectral norm. \textbf{(b)} The rightmost subplot shows the training accuracy for the ResNet-18 model that is trained on CIFAR-10. One curve belongs to the model with the convolutional layers clipped to $1$ using FastClip and the batch norm layers clipped using the direct method used by prior works (FastClip-clip BN). The other two belong to FastClip and FastClip-concat.}
\label{fig:resnet18-catclip-study}
\end{figure*}


We also performed the same experiment with the DLA model on both MNIST and CIFAR-10. These models are more complex than the ResNet-18 models and have more convolutional layers and parameters. An interesting observation with the DLA model was that they could not be trained at all without batch normalization layers with the default settings, even if the spectral norm is clipped using any of the clipping methods. Therefore, unlike the results for ResNet-18, we do not show the numbers for the \texttt{No BN} case (the test accuracies were almost the same as the random classifier in all cases). As~\Cref{tab:dla-results} shows, our clipping method makes the most improvement in generalization on both datasets and in terms of adversarial robustness, it is either better than the other methods or achieves competitive results. Note that, unlike ResNet-18 models in which according to the experiments by~\cite{gouk2021regularisation} we divided the sum of the two layers in the residual modules to make the whole output $1$-Lipschits at each layer, we used DLA models as they are designed without further modification. Since the DLA model contains modules in which the output of two layers are summed up to generate the inputs of the succeeding layer, making each one of the layers $1$-Lipshitz does not make each module $1$-Lipschitz. This might affect the robustness of models; however, we wanted to use this model as it is to show the benefits of our method applied to an original model without any modification.


\begin{table}[t]
\caption{The accuracy on the test set and adversarial examples generated with PGD-$(50)$ and CW for DLA trained on MNIST and CIFAR-10, for the models with their convolutional and dense layers clipped to $1$. The accuracy of the original model on the test set, PGD-generated examples, and CW-generated examples for MNIST are $\pmb{99.39 \pm 0.08}$, $\pmb{0.35 \pm 0.48}$, and $\pmb{50.78 \pm 6.58}$, respectively. For CIFAR-10, these values are $\pmb{94.82 \pm 0.12}$, $\pmb{21.67 \pm 1.42}$, and $\pmb{11.19 \pm 1.35}$.}
\label{tab:dla-results}
\vskip 0.15in
\begin{center}
\begin{small}
\begin{sc}
\begin{tabular}{@{} l  c c  c  c  c c  @{}}
 \toprule
 

\textbf{Method} &  \textbf{Test Set} & \textbf{PGD-(50)} & \textbf{CW}  \\\addlinespace[0.4em]


 & \multicolumn{3}{@{}c}{CIFAR-10}   \\ \addlinespace[0.2em]
\cmidrule(r){2-4}                        
 
 \cite{miyato2018spectral} &  $95.06 \pm 0.09$ & $\pmb{23.62 \pm 1.27}$ 
 & $\pmb{17.75 \pm 1.34}$  \\\addlinespace[0.3em]

 \cite{gouk2021regularisation} & $92.44 \pm 0.14$ &  $19.39 \pm 1.34$ 
 & $13.41 \pm 1.08$  \\\addlinespace[0.3em]

 \cite{senderovich2022towards} &  $93.39 \pm 2.30$ & $20.37 \pm 2.67$ 
 & $15.39 \pm 0.87$   \\\addlinespace[0.3em]

\cite{delattre2023efficient} &  $93.66 \pm 0.46$ & $19.90 \pm 2.47$ 
& $14.92 \pm 2.77$    \\\addlinespace[0.3em]

FastClip & $\pmb{95.53 \pm 0.10 }$ & $22.54 \pm 1.02$      
& $17.14 \pm 0.76$  \\\addlinespace[0.5em]


 & \multicolumn{3}{@{}c}{MNIST}   \\ \addlinespace[0.2em]

\cmidrule(r){2-4}                        

 \cite{miyato2018spectral} &  $99.40 \pm 0.05$ & $3.26 \pm 2.95$ 
 & $78.59 \pm 5.34$  \\\addlinespace[0.3em]

 \cite{gouk2021regularisation} & $99.26 \pm 0.06$ &  $3.13 \pm 2.49$ 
 & $79.16 \pm 7.07$  \\\addlinespace[0.3em]

 \cite{senderovich2022towards} &  $99.43 \pm 0.03$ & $14.03 \pm 5.27$ 
 & $83.92 \pm 2.33$   \\\addlinespace[0.3em]

\cite{delattre2023efficient} &  $99.34 \pm 0.04$ & $4.28 \pm 2.11$ 
& $68.02 \pm 5.37$    \\\addlinespace[0.3em]

FastClip & $\pmb{99.44 \pm 0.04}$ & $\pmb{16.74 \pm 7.09}$      
& $\pmb{84.90 \pm 1.59}$  \\\addlinespace[0.4em]

\bottomrule
\end{tabular}
\end{sc}
\end{small}
\end{center}
\vskip -0.1in
\end{table}

\subsection{Clipping Batch Norm}
\label{exp-bn-clip}


 As~\Cref{tab:bn-cifar} shows, the presence of batch normalization can be essential for achieving high test accuracy. In fact, as explained in~\Cref{sec-generalization}, DLA models cannot be trained without batch norm layers. So, instead of removing them, we are interested in a method that allows us to control their adverse effect on the robustness of the model. First, we explored the results for the models with their batch norm layers clipped to strictly less than $1$ by utilizing the method that was used by prior work~\citep{gouk2021regularisation,senderovich2022towards}. As the results show, our method still achieves the best results with this technique; however, this method for clipping the batch norm, although leads to bounded Lipschitz constant for the model, does not lead to a significant improvement in the robustness of the models and leads to low test accuracy, which is due to over-constraining the models and hindering their optimization as discussed in~\Cref{sec-bn}. In fact, as~\Cref{fig-resnet-bnclip-train-acc} shows, using this method for controlling batch norm layers even hinders the training process (we show this version of our method by \texttt{FastClip-clip BN} in our experiments). 

\begin{table}[t]
\caption{The accuracy on the test set and adversarial examples generated with PGD $(50), \epsilon=0.02$ of ResNet-18 trained on CIFAR-10, for the models with their convolutional and dense layers clipped to $1$ (\texttt{With BN}). For all the models, except \texttt{FastClip-concat}, the batch norm layers are clipped to strictly $1$ using the method by~\cite{gouk2021regularisation}. \texttt{FastClip-concat} uses FastClip method for controlling the batch norm of the concatenation of convolutional and batch norm layers, as described in~\Cref{exp-bn-clip}. As the results show, this method does not impede the optimization of the model and leads to a much better test accuracy while making the models more robust compared to when batch norm layers are not taken into account during the clipping process (see~\Cref{tab:bn-cifar} and \Cref{tab:dla-results}).}
\label{tab:bn-clip}
\vskip 0.15in
\begin{center}
\begin{small}
\begin{sc}
\begin{tabular}{@{} l  c c  c  c  c c  @{}}
 \toprule
 

\textbf{Method} &  \textbf{Test Set} & \textbf{PGD-50 $\epsilon=0.02$} & \textbf{CW $c=0.02$}  \\\addlinespace[0.4em]


 & \multicolumn{3}{@{}c}{ResNet-18 Model}   \\ \addlinespace[0.2em]
\cmidrule(r){2-4}                        
 
 \cite{miyato2018spectral} &  $85.91 \pm 0.27$ & $17.63 \pm 0.44$ 
 & $\pmb{49.94 \pm 2.25}$  \\\addlinespace[0.3em]

 \cite{gouk2021regularisation} & $27.33 \pm 2.11$ &  $13.89 \pm 0.68$ 
 & $11.23 \pm 3.53$  \\\addlinespace[0.3em]

 \cite{senderovich2022towards} &  $69.27 \pm 4.35$ & $16.22 \pm 1.86$ 
 & $25.27 \pm 3.43$   \\\addlinespace[0.3em]

\cite{delattre2023efficient} &  $30.44 \pm 3.59$ & $12.99 \pm 4.16$ 
& $10.17 \pm 6.63$    \\\addlinespace[0.3em]

FastClip & $90.59 \pm 0.36 $ & $\pmb{25.97 \pm 0.88}$      
& $41.50 \pm 2.02$  \\\addlinespace[0.6em]

FastClip-concat & $\pmb{94.63 \pm 0.08 }$ & $25.02 \pm 1.56$      
& $33.77 \pm 3.59$  \\\addlinespace[0.5em]


 & \multicolumn{3}{@{}c}{DLA Model}   \\ \addlinespace[0.2em]

\cmidrule(r){2-4}                        

 \cite{miyato2018spectral} &  $80.91 \pm 0.95$ & $16.29 \pm 2.79$ 
 & $\pmb{40.64 \pm 5.34}$  \\\addlinespace[0.3em]

 \cite{gouk2021regularisation} & $29.35 \pm 1.19$ &  $13.07 \pm 3.13$ 
 & $11.95 \pm 3.97$  \\\addlinespace[0.3em]

 \cite{senderovich2022towards} &  $74.94 \pm 1.03$ & $13.92 \pm 1.25$ 
 & $34.14 \pm 3.52$   \\\addlinespace[0.3em]

\cite{delattre2023efficient} &  $32.31 \pm 4.38$ & $12.18 \pm 3.14$ 
& $12.07 \pm 5.45$    \\\addlinespace[0.3em]

FastClip & $89.27 \pm 0.25$ & $23.40 \pm 1.46$      
& $39.16 \pm 2.76$  \\\addlinespace[0.6em]

FastClip-concat & $\pmb{95.02 \pm 0.07 }$ & $\pmb{25.93 \pm 1.31}$      
& $28.16 \pm 0.81$  \\\addlinespace[0.4em]
   
\bottomrule
\end{tabular}
\end{sc}
\end{small}
\end{center}
\vskip -0.1in
\end{table}

 The capability of our clipping method to be applied to the concatenation of implicitly linear layers provides an alternative approach to control the spectral norm of the concatenation of convolutional layers and batch norm layers. This still leads to the desired Lipschitz constants for the model, without over-constraining each individual layer. For this purpose, we clip the convolutional layers of the model to the target value, and meanwhile, we pass the modules that represent the composition of batch norm layers and their preceding convolutional layers to our clipping method while leaving the batch norm layers themselves unclipped. We will refer to this version of our clipping method as \texttt{FastClip-concat} in our experiments. In~\Cref{fig-resnet-catclip}, we show the effect of this approach on $3$ of the layers from the ResNet-18 model trained using \texttt{FastClip-concat}. As the plot shows, the convolutional layers are correctly clipped to $1$, and the spectral norm of the concatenations are approaching the target value $1$, while the spectral norm of the batch norm layers might increase up to an order of magnitude larger than the target clipping value. The convergence of the spectral norm of the concatenation to the target value is slower because we used a much smaller $\lambda$ value (see~\Cref{alg:clip}) to make the clipping method stable without changing the other hyperparameters.  
 
 ~\Cref{fig-resnet-bnclip-train-acc} shows the trajectory of the training accuracies for \texttt{FastClip}, \texttt{FastClip-concat}, and \texttt{FastClip-clip BN} (which uses \texttt{FastClip} together with the direct batch norm clipping method used in prior works~\cite{gouk2021regularisation}). As the figure shows, direct clipping of the batch norm layers hinders the optimization of the model and hence leads to poor results presented in~\Cref{tab:bn-clip}, while \texttt{FastClip-concat} follows the same trajectory as \texttt{FastClip} in terms of the training accuracy, which shows less interference with the optimization of the model. Next, we elaborate on the results presented in~\Cref{tab:bn-clip}.

We investigated the performance of both ResNet-18 and DLA models on the test set, as well as their adversarial robustness, when the clipping method for batch norm layers introduced by~\citet{gouk2021regularisation} is used. These results are presented in~\Cref{tab:bn-clip}. Moreover, we investigate the application of our clipping method to the concatenation of convolutional layers and their succeeding batch norm layers, rather than controlling the batch norm layers in isolation. We present the latter results, which is unique to our method, as \texttt{FastClip-concat} in~\Cref{tab:bn-clip}. In this setting, we use a smaller value for $\lambda$ (see~\Cref{alg:clip}), and apply the clipping every $500$ steps. As the results show, with the regular batch norm clipping introduced by~\citet{gouk2021regularisation} our method still achieves the best test accuracy among the models and increases the robustness compared to the original model, however, neither of the clipping methods can achieve generalization or adversarial robustness better than the \texttt{FastClip} model with the batch norm layers removed (see~\Cref{tab:bn-cifar} and~\Cref{tab:dla-results}). This shows the previously suggested clipping method for batch norm layers, although theoretically bounding the Lipschitz constant of the model, does not help us in practice. On the other hand, \texttt{FastClip-concat} helps us improve the robustness compared to the original model and \texttt{FastClip}, but still achieves an accuracy on the test set which is close to \texttt{FastClip} and much better than the models with their batch norm layers removed.

 As these results show, \texttt{FastClip-concat} is more successful in achieving its goal; it bounds the spectral norm of the concatenation and leads to improved robustness without incurring as much loss to the test accuracy compared to removing the batch normalization layers. Further optimization of the hyperparameters (e.g., clipping value of the convolutional layer, clipping value of the concatenation, $\lambda$, number of steps for clipping, etc.) for the convolutional layers and the concatenations, and finding the best combination is left for future work.

\section{CONCLUSIONS}

We introduced efficient and accurate algorithms for extracting and clipping the spectrum of implicitly linear layers. We showed they are more accurate and effective than existing methods. Also, our algorithms are unique in that they can be applied to not only convolutional and dense layers, but also the concatenation of these layers with other implicitly linear ones, such as batch normalization. This opens up the new possibilities in controlling the Lipschitz constant of models and needs to be explored further in future work. Through various experiments, we showed that our clipping method can be used as an effective regularization method for the neural networks containing convolutional and dense layers, which helps the model to generalize better on unseen samples and makes it more robust against adversarial attacks. 

\bibliography{example_paper}
\bibliographystyle{iclr2024_conference}

 \newpage

 \onecolumn
\appendix

\label{apx-A}

\section{Omitted Proofs}
\label{apx-sec-omitted-proofs}

\subsection{Proofs of Section~\ref{sec-extraction}}

We start by proving Proposition~\ref{lemma-shifted-subspace}:

\begin{proof}
 Let $g(X) = f(X) - f(0) = M_W X + b - b = M_W X$. Notice that $\nabla_X \frac{1}{2} \|g(X)\|^2 = M_W ^T M_W X = A_W X$. Let define $A^\prime = A_W + \mu I$. Putting steps $5$ and $6$ of the algorithm together, we have $(Q, R) = \mathrm{QR}(A_W X + \mu X) = \mathrm{QR}(A^\prime X)$. Therefore, the algorithm above can be seen as the same as the subspace iteration for matrix $A^\prime$. Hence, the diagonal values of $R$ will converge to the eigenvalues of $A^\prime$, and columns of $Q$ converge to the corresponding eigenvectors. If $\lambda$ is an eigenvalue of $A^\prime = A_W + \mu I$, then $\lambda - \mu$ is an eigenvalue of $A_W$. Therefore, by subtracting $\mu$ from the diagonal elements of $R$, we get the eigenvalues of $A_W$, which are squared of singular values of $M_W$. The eigenvectors of $A^\prime$ are the same as the eigenvectors of $A_W$ and, therefore, the same as the right singular vectors of $M_W$. 

\end{proof}

\subsection{Proofs of Section~\ref{sec-limitations}}

For proving~\Cref{theorem-limitations}, we first present and prove Lemma~\ref{pro-eigval} which proves a similar result for convolutional layers with only $1$ input and output channel.

\begin{lemma}
Given a convolutional layer with single input and output channel and circular padding applied to an input whose length of vectorized form is $n$, if the vectorized form of the kernel is given by $\textbf{f} = [f_0, f_1, \dots, f_{k-1}]$, the singular values are:
\begin{align}
    \mathcal{S}(\omega) = \left\{\sqrt{c_0 + 2\sum_i^{k-1} c_i \Re(\omega^{j\times i})},\,\, j=0,1,2, \dots, n-1 \right\}
\end{align}

\noindent where $c_i$'s are defined as:
\begin{align*}
    c_0 &:= f_0^2 + f_1^2 + \dots + f_{k-1}^2,\\
    c_1 &:= f_0f_1 + f_1f_2 + \dots + f_{k-2}f_{k-1},\\
    &\vdots\\
    c_{k-1} &:= f_0f_{k-1}.
\end{align*}

\label{pro-eigval}
\end{lemma}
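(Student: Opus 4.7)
The plan is to recognize $M_W$ as a circulant matrix when circular padding is used, diagonalize it via the DFT, and then compute the squared singular values $|\lambda_j|^2$ directly, grouping terms by the index difference so that pairs of complex-conjugate contributions collapse into the real parts $\Re(\omega^{jd})$ that appear in the statement.

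First I would observe that the convolution $x \mapsto M_W x$ with circular padding on a single-channel signal whose vectorization has length $n$ is translation-equivariant with respect to the cyclic group $\mathbb{Z}/n\mathbb{Z}$. This forces $M_W$ to be a circulant matrix whose first column is the kernel $[f_0,f_1,\dots,f_{k-1},0,\dots,0]^\top$, suitably zero-padded to length $n$. (In the 2D case with a 1D-flattened kernel, the same argument produces a block-circulant-with-circulant-blocks matrix, which is still diagonalized by the 2D DFT; the algebra below is unchanged because only the 1D indexing of the kernel matters for the formula.)

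Next I would invoke the classical spectral decomposition of circulant matrices: $M_W = F^{*} D F$, where $F$ is the unitary DFT matrix with entries $F_{j,i} = \omega^{ji}/\sqrt{n}$, and $D$ is diagonal with
\begin{equation*}
\lambda_j \;=\; \sum_{i=0}^{k-1} f_i\,\omega^{ji}, \qquad j \in [n-1].
\end{equation*}
Because circulant matrices are normal ($M_W M_W^{*} = M_W^{*} M_W$), the singular values of $M_W$ are the moduli of its eigenvalues, so $\sigma_j(W) = |\lambda_j|$ and in particular
\begin{equation*}
\sigma_j(W)^2 \;=\; \lambda_j\overline{\lambda_j} \;=\; \sum_{i=0}^{k-1}\sum_{i'=0}^{k-1} f_i f_{i'}\,\omega^{j(i-i')}.
\end{equation*}

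The last step is to reorganize the double sum by the difference $d := i - i'$. The diagonal contribution $d=0$ yields $\sum_i f_i^2 = c_0$. For each $d \in \{1,\dots,k-1\}$, the pair of terms indexed by $d$ and $-d$ contribute
\begin{equation*}
\Bigl(\sum_{i=0}^{k-1-d} f_{i+d} f_i\Bigr)\bigl(\omega^{jd} + \omega^{-jd}\bigr) \;=\; 2 c_d \,\Re(\omega^{jd}),
\end{equation*}
where $c_d = \sum_{i=0}^{k-1-d} f_i f_{i+d}$ coincides with the definition in the lemma (e.g.\ $c_{k-1} = f_0 f_{k-1}$). Taking square roots then gives the claimed formula for $\mathcal{S}(\omega)$.

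The only delicate point is handling the indexing precisely when identifying $M_W$ as circulant and when collecting the pairs $\{(i,i'):\,i-i'=\pm d\}$; the rest is routine. I do not expect a real obstacle because the fact that circular convolutions are diagonalized by the DFT is the standard workhorse here, and the grouping of cross-terms by lag $d$ is exactly the content of the autocorrelation identity.
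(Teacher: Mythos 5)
Your proof is correct and rests on the same core facts as the paper's: the circulant structure induced by circular padding, the DFT eigenvalue formula, and the collapse of the lag-$\pm d$ cross-terms into the autocorrelations $c_d$ times $2\Re(\omega^{jd})$. The only cosmetic difference is that the paper forms $M = A^\top A$ (a symmetric circulant) and reads its eigenvalues off its first row $[c_0, c_1, \dots, c_{k-1}, 0, \dots, 0, c_{k-1}, \dots, c_1]$, whereas you diagonalize $A$ itself and invoke normality to identify singular values with $|\lambda_j|$; both routes reduce to the identical algebra, and the centering shift of the kernel in the first row (by $\lfloor k/2\rfloor$) that you gloss over only contributes a harmless phase to each $\lambda_j$.
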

\begin{proof}
The above convolutional operator is equivalent to a circulant matrix $A$ with $[f_m, f_{m+1}, \dots, f_{k-1}, 0, \dots, 0, f_0, f_1, \dots, f_{m-1}]^T$ as its first row, where
$m=\floor{\frac{k}{2}}$~\cite{jain1989fundamentals,sedghi2018singular}. Singular values of $A$ are the eigenvalues of $M=A^TA$. Circulant matrices are closed with respect to
multiplication, and therefore $M$ is a symmetric circulant matrix. It is easy to check that the first row of $M$ is $r = [c_0, c_1, \dots, c_{k-1}, 0, \dots, 0, c_{k-1},\dots, c_1]^T$. 

Now, we know the eigenvalues of a circulant matrix with first row $v = [a_0, a_1, \dots, a_{n-1}]^T$ are given by the set $\Lambda = \{ v\Omega_j, j=0,1,\dots,n-1 \}$, where $\Omega_j = [\omega^{j\times 0}, \omega^{j\times 1}, \dots, \omega^{j\times n-1}]$. So eigenvalues of $M$ are given by the set below:

\begin{align*}
    \{r\Omega_j,\, j=0,1,\dots, n-1\} &= \{c_0 + c_1\omega^{j\times 1} + c_2\omega^{j \times 2} + \dots + c_{k-1}\omega^{j\times (k-1)} \\
    &+ c_{k-1}\omega^{j\times (n-k+1)} + c_{k-2}\omega^{j\times (n-k+2)} + \dots + c_1\omega^{j\times (n-1)}, 
    \, j=0,1,\dots, n-1 \}.
\end{align*}

Note that $\omega^{j\times i}$ has the same real part as $\omega^{j\times (n-i)}$, but its imaginary part is mirrored with respect to the real axis (the sign is flipped). Therefore, $\omega^{j\times i} + \omega^{j\times (n-i)} = 2\Re(\omega^{j\times i})$. Using this equality the summation representing the eigenvalues of $M$ can be simplified to:

\begin{align*}
    \{c_0 + 2\sum_i^{k-1} c_i \Re(\omega^{j\times i}),\, j=0,1,\dots,n-1 \},
\end{align*}

and therefore the singular values can be derived by taking the square roots of these eigenvalues.
\end{proof}

Now using Lemma~\ref{pro-eigval}, we can easily prove~\Cref{theorem-limitations}.


\begin{proof}
The convolutional layer with $m$ output channels and one input channel can be represented by a matrix $M = [M_1, M_2, \dots, M_m]^T$, where each $M_i$ is an $n\times n$ circulant matrix representing the $i-$th channel. Therefore, $A = M^T M$ (or $MM^T$ when there are multiple input channels) can be written as $\sum_{l=1}^m M_l^T M_l$. So, for circulant matrix $A$ we have $c_i= \sum_{l=1}^m c_i^{(l)}$, and the proof can be completed by using Lemma~\ref{pro-eigval}. 
\end{proof}

Next, we focus on Corollary~\ref{cor_eig_2}, which uses the results of~\Cref{theorem-limitations} to give an easy-to-compute lower and upper bounds for the spectral norm of the convolutional layers with either one input or output channel. When the filter values are all positive, these bounds become equalities and provide and easy way to compute the exact spectral norm.

\begin{corollary}
Consider a convolutional layer with $1$ input channel and $m$ output channels or $1$ output channel and $m$ input channels and circular padding. If the vectorized form of the kernel of the $l$-th channel is given by $\textbf{f}^\textbf{(l)} = [f_0^{(l)}, f_1^{(l)}, \dots, f_{k-1}^{(l)}]$, and the largest singular value of the layer is $\sigma_1$, then:
\begin{align}
   \sqrt{\sum_{l=1}^{m} \left(\sum_{i=0}^{k-1} f_i^{(l)}\right)^2} \leq \sigma_1 \leq \sqrt{\sum_{l=1}^{m} \left(\sum_{i=0}^{k-1} |f_i^{(l)}| \right)^2},
\end{align}

\noindent and therefore the equalities hold if all $f_i^{(l)}$s are non-negative.

\label{cor_eig_2}
\end{corollary}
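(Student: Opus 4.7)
The plan is to read the bounds straight off the closed-form expression for the singular values given in Theorem~\ref{theorem-limitations}. Recall that each singular value has the form $\sqrt{\sum_{l=1}^m \mathcal{S}^{(l)}_j(\omega)^2}$, where $\mathcal{S}^{(l)}_j(\omega)^2 = c_0^{(l)} + 2\sum_{i=1}^{k-1} c_i^{(l)} \Re(\omega^{j\cdot i})$, indexed by $j \in [n-1]$. The key observation is a combinatorial identity for the $c_i^{(l)}$: since $c_i^{(l)}$ sums $f_p^{(l)} f_{p+i}^{(l)}$ over all valid $p$, every unordered pair of distinct filter entries appears in exactly one $c_i^{(l)}$ with $i \geq 1$, and every square $f_p^{(l)2}$ appears in $c_0^{(l)}$. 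Hence $c_0^{(l)} + 2\sum_{i=1}^{k-1} c_i^{(l)} = \bigl(\sum_{i=0}^{k-1} f_i^{(l)}\bigr)^2$.

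For the lower bound, I would evaluate at $j=0$. Then $\omega^{j\cdot i} = 1$ for all $i$, so $\Re(\omega^{j\cdot i}) = 1$, and the identity above gives $\mathcal{S}^{(l)}_0(\omega)^2 = (\sum_i f_i^{(l)})^2$. Summing over the $m$ channels and taking the square root exhibits $\sqrt{\sum_l (\sum_i f_i^{(l)})^2}$ as one element of the spectrum, and $\sigma_1$ is the maximum over all such elements, which yields the claimed lower bound.

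For the upper bound, I would bound each individual term. Since $|\Re(\omega^{j\cdot i})| \leq 1$ and, by the triangle inequality applied to the definition of $c_i^{(l)}$, $|c_i^{(l)}| \leq \sum_p |f_p^{(l)}|\, |f_{p+i}^{(l)}| =: \tilde c_i^{(l)}$, we obtain $c_i^{(l)} \Re(\omega^{j\cdot i}) \leq \tilde c_i^{(l)}$ for every $i \geq 1$, while $c_0^{(l)} = \tilde c_0^{(l)}$. Summing and using the same combinatorial identity (now applied to the $|f_i^{(l)}|$) gives $\mathcal{S}^{(l)}_j(\omega)^2 \leq \bigl(\sum_i |f_i^{(l)}|\bigr)^2$ uniformly in $j$. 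Summing over $l$ and taking the square root bounds every singular value, hence $\sigma_1$, by the desired upper expression.

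The final sentence about equality when all $f_i^{(l)} \geq 0$ is immediate: the lower and upper bounds coincide in that case, so $\sigma_1$ must equal both. The only substantive step is the combinatorial identity, which is routine; nothing in the argument is deep, so the proposal is essentially ``plug $j=0$ in for the lower bound, apply $|\Re(\omega^{j\cdot i})|\leq 1$ and the triangle inequality for the upper bound.''
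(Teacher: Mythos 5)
Your proposal is correct and follows essentially the same route as the paper: the lower bound comes from evaluating the closed form at $\omega^0=1$ together with the identity $c_0^{(l)}+2\sum_{i\geq 1}c_i^{(l)}=\bigl(\sum_i f_i^{(l)}\bigr)^2$, and the upper bound from $|\Re(\omega^{j\cdot i})|\leq 1$ plus the triangle inequality applied to the same identity on the $|f_i^{(l)}|$. The only cosmetic difference is that the paper first isolates the single-channel case as a separate corollary and then sums over channels, whereas you handle all $m$ channels in one pass.
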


%
%

To make the proof more clear, we first present a similar corollary for Lemma~\ref{pro-eigval}, which proves similar results for convolutions with only $1$ input and output channel.

\begin{corollary}
Consider a convolutional layer with single input and output channels and circular padding. If the vectorized form of the kernel is given by $\textbf{f} = [f_0, f_1, \dots, f_{k-1}]$, and the largest singular value of the layer is $\sigma_1$, then:
\begin{align}
    \sum_{i=0}^{k-1} f_i \leq \sigma_1 \leq \sum_{i=0}^{k-1} |f_i|,
\end{align}

\noindent and therefore, the equalities hold if all $f_i$s are non-negative.

\label{cor_eig}
\end{corollary}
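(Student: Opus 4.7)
The plan is to read off both bounds directly from the closed-form expression for the singular values provided by Lemma~\ref{pro-eigval}, namely $\sigma_j = \sqrt{c_0 + 2\sum_{i=1}^{k-1} c_i \Re(\omega^{ji})}$ for $j \in [n-1]$. The lower bound will come from evaluating this expression at a single carefully chosen $j$, while the upper bound will come from a uniform bound on $\Re(\omega^{ji})$ combined with a triangle-inequality-style bound on each $c_i$. The main calculational ingredient in both directions is the algebraic identity $c_0 + 2\sum_{i=1}^{k-1}c_i = \bigl(\sum_{i=0}^{k-1} f_i\bigr)^2$, which is immediate from $(\sum_i f_i)^2 = \sum_i f_i^2 + 2\sum_{i<j} f_i f_j$ together with the definitions of the $c_i$.

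For the lower bound, I would pick $j = 0$ so that $\omega^{0 \cdot i} = 1$ and $\Re(\omega^{0 \cdot i}) = 1$ for every $i$. Substituting into the closed form exhibits one particular singular value equal to $\sqrt{c_0 + 2\sum_{i=1}^{k-1} c_i} = \bigl|\sum_{i=0}^{k-1} f_i\bigr|$. Since $\sigma_1$ is the maximum of all the singular values, this immediately gives $\sigma_1 \geq \bigl|\sum_i f_i\bigr| \geq \sum_i f_i$, regardless of the signs of the $f_i$.

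For the upper bound, I would bound $\sigma_j^2$ uniformly in $j$. Using $\Re(\omega^{ji}) \in [-1,1]$ and the triangle inequality $|c_i| \leq \sum_l |f_l|\,|f_{l+i}| =: c_i^{\mathrm{abs}}$, one has $c_i \Re(\omega^{ji}) \leq |c_i| \leq c_i^{\mathrm{abs}}$, and therefore
\[
\sigma_j^2 \;\leq\; c_0^{\mathrm{abs}} + 2\sum_{i=1}^{k-1} c_i^{\mathrm{abs}} \;=\; \Bigl(\sum_i |f_i|\Bigr)^2,
\]
where the final equality is the same algebraic identity used for the lower bound, but applied to $|f_i|$ in place of $f_i$. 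Taking square roots and then the maximum over $j$ gives $\sigma_1 \leq \sum_i |f_i|$. When every $f_i \geq 0$ we have $|f_i| = f_i$, so $\sum_i f_i = \sum_i |f_i|$ and the two bounds collapse, forcing the claimed equality.

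I do not anticipate a serious obstacle: the only nontrivial piece is recognising that the sum of all $c_i$-coefficients at $j=0$ is a perfect square, and that the same identity applied to $|f_i|$ yields exactly the upper bound. Both are short combinatorial expansions amounting to reorganising a double sum, and no further structural property of the circulant representation is required beyond what Lemma~\ref{pro-eigval} already encodes.
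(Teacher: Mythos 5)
Your proposal is correct and follows essentially the same route as the paper's proof: evaluate the closed form of Lemma~\ref{pro-eigval} at $\omega=1$ and use the perfect-square identity $c_0+2\sum_i c_i=(\sum_i f_i)^2$ for the lower bound, then combine $|\Re(\omega^{ji})|\le 1$ with the triangle inequality on each $c_i$ and the same identity applied to $|f_i|$ for the upper bound. Your phrasing of the lower bound is in fact slightly more careful than the paper's, since the singular value at $\omega=1$ is $\bigl|\sum_i f_i\bigr|$ rather than $\sum_i f_i$ itself, and you correctly pass through $\sigma_1\ge\bigl|\sum_i f_i\bigr|\ge\sum_i f_i$.
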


\begin{proof}
From Lemma~\ref{pro-eigval}, we can write:

\begin{align*}
    \mathcal{S}(\omega) &= \sqrt{c_0 + 2\sum_i^{k-1} c_i \Re(\omega^i)}
    = \sqrt{|c_0 + 2\sum_i^{k-1} c_i \Re(\omega^i)|}\\
    &\leq \sqrt{|c_0| + 2\sum_i^{k-1} |c_i \Re(\omega^i)|}
    \leq \sqrt{|c_0| + 2\sum_i^{k-1} |c_i|},
\end{align*}

\noindent where the last inequality is due to the fact that $\Re(\omega^i) \leq 1$. Now, for $c_i$s we have:
\begin{align*}
    |c_0| &= f_0^2 + f_1^2 + \dots + f_{k-1}^2,\\
    2|c_1| &\leq 2\left( |f_0||f_1| + |f_1||f_2| + \dots + |f_{k-2}||f_{k-1}|\right),\\
    &\vdots\\
    2|c_{k-1}| &\leq 2\left( |f_0||f_{k-1}|\right).
\end{align*}

Note that the summation of the right-hand sides of the above inequalities is equal to $(\sum_{i=0}^{k-1} |f_i|)^2$ Therefore:
\begin{align*}
    \mathcal{S}(\omega) \leq \sum_{i=0}^{k-1} |f_i| \implies \sigma_1 \leq \sum_{i=0}^{k-1} |f_i|.
\end{align*}

Since $1$ is always one of the $n-$th roots of unity, if all the $f_i$s are non-negative, all the above inequalities hold when $\omega = 1$ is considered.    
Now, note that when $\omega = 1$:

\begin{align*}
    \mathcal{S}^2(1) &= c_0 + 2\sum_i^{k-1} c_i \\
    &=f_0^2 + f_1^2 + \dots + f_{k-1}^2 \,\,\,\,\text{($c_0$)}\\
    &\quad + 2f_0f_1 + 2f_1f_2 + \dots + 2f_{k-2}f_{k-1}\,\,\,\,\text{($c_1$)}\\
    &\quad \,\,\,\,\vdots\\
    &\quad + 2f_0f_{k-1}\,\,\,\,\text{($c_{k-1}$)}\\
    &= (\sum_{i=0}^{k-1} f_i)^2
\end{align*}
Therefore, $\sum_{i=0}^{k-1} f_i$ is a singular value of the convolution layer, which gives a lower bound for the largest one and this completes the proof.

\end{proof}

Now we give the proof for Corollary~\ref{cor_eig_2}. 

%
%
%
%
%

\begin{proof}
    From~\Cref{theorem-limitations}, we can write:

\begin{align*}
    \mathcal{S}(\omega) = \sqrt{\sum_{l=1}^m \mathcal{S}^{(l)^2}(\omega)} \leq \sqrt{\sum_{l=1}^m (\sum_{i=0}^{k-1} |f_i^{(l)}|)^2},
\end{align*}

    \noindent where the inequality is a result of using Corollary~\ref{cor_eig} for each $\mathcal{S}^{(l)}(\omega)$. 
    For showing the left side of inequality, we set $\omega = 1$ ($1$ is one of the $n$-th roots of unity), and again use~\Cref{theorem-limitations} to get:
    
\begin{align*}
    \mathcal{S}(1) &= \sqrt{\sum_{l=1}^m \mathcal{S}^{(l)^2}(1)} = 
    \sqrt{c_0^{(l)} + 2\sum_i^{k-1} c_i^{(l)} \Re(1)}  
    = \sqrt{\sum_{l=1}^m (\sum_{i=0}^{k-1} f_i^{(l)})^2},
\end{align*}

\noindent where the last equality was shown in the proof of Corollary~\ref{cor_eig}. This shows the left side of the inequality in (4) is one of the singular values of the layer, and hence is a lower bound for the spectral norm (the largest singular value of the layer).
    
\end{proof}

\section{Other Algorithms}
\label{apx-B}

\subsection{Modifying the Whole Spectrum}
\label{sec-modification}

If we use PowerQR to extract singular values and right singular vectors $S$ and $V$ from $M_W = USV^T$, then
given new singular values $S^\prime$, we can modify the spectrum of our function to generate a linear operator $f^\prime: x \rightarrow M_W^\prime x + b$, where $M_W^\prime = US^\prime V^T$, without requiring the exact computation of $U$ or $M_W$:
\begin{align*}
    f_W(V S^{-1} S^\prime V^T x) - f(0) = USV^T V S^{-1} S^\prime V^T x 
    = US^\prime V^T x = f^\prime (x) - b.
\end{align*}

Although $f^\prime$ is a linear operator with the desired spectrum, it is not necessarily in the desired form. For example, if $f_W$ is a convolutional layer with $W$ as its kernel, $f_W(V S S^\prime V^T x)$ will not be in the form of a convolutional layer. To find a linear operator of the same form as $f_W$, we have to find new parameters $W^\prime$ that give us $f^\prime$. For this, we can form a convex objective function and use SGD to find the parameters $W^\prime$ via regression:
\begin{align}
    \min_{W^\prime} \mathbb{E}_x \left\|f_{W^\prime}(x) - f_W(V S^{-1} S^\prime V^T x)\right\|_\mathrm{F}^2,
    \label{equ_modify}
\end{align}

\noindent where $x$ is randomly sampled from the input domain. Note that we do not need many different vectors $x$ to be sampled for solving~\ref{equ_modify}. If the rank of the linear operator $f_W(.)$ is $n$, then sampling as few as $n$ random data points would be enough to find the optimizer $f_{W^\prime}$. This procedure can be seen as two successive projections, one to the space of linear operators with the desired spectrum and the other one to the space of operators with the same form as $f_W$ (e.g., convolutional layers). This method can be very slow because the matrix $V$ can be very large. One might reduce the computational cost by working with the top singular values and singular vectors and deriving a low-ranked operator. This presented method is not practical for controlling the spectral norm of large models during training, similar to the other algorithms that need the whole spectrum for controlling the spectral norm~\citep{sedghi2018singular,senderovich2022towards}.


Another important point to mention here is that depending on the class of linear operators we are working with, the objective~\ref{equ_modify} might not reach zero at its minimizer. For example, consider a 2d convolutional layer whose kernel is $1\times 1$ with a value of $c$. Applying this kernel to any input of size $n\times n$ scales the value of the input by the $c$. The equivalent matrix form of this linear transformation is an $n\times n$ identity matrix scaled by $c$. We know that this matrix has a rank of $n$, and all the singular values are equal to $1$. Therefore, if the new spectrum, $S^\prime$, does not represent a full-rank transformation, or if its singular values are not all equal, we cannot attain a value of $0$ in optimization~\ref{equ_modify}. As we showed in~\Cref{sec-limitations}, this problem is more general for the convolutional layers, as they are restricted in the spectrums they can represent.

\subsection{Clipping Batch Norm}
\label{apx-alg-bn}


Batch Normalization~\citep{ioffe2015batch} has proved to successfully stabilize and accelerate the training of deep neural networks and is thus by now standard in many architectures that contain convolutional layers. However, the adverse effect of these layers on the adversarial robustness of models has been noted in previous research~\cite{xie2019intriguing,benz2021revisiting,galloway2019batch}. As we showed in~\Cref{fig-resnet-concat}, not controlling the spectral norm of the batch normalization layers might forfeit the benefits of merely controlling the spectral norm of convolutional layers. We also revealed the compensating behavior of these layers as the spectral norm of convolutional layers are clipped to smaller values in~\Cref{fig-compensation}. The presented results in~\Cref{tab:bn-cifar} confirm this adverse effect of batch norm layers by showing a boost in the adversarial robustness of the models when these layers are removed from the network; however, as the results show, removing these layers from the model will incur a noticeable loss to the performance of the model on the test set and hinders the optimization. As pointed out in~\Cref{sec-generalization}, removing these layers from more complex modes such as DLA might completely hinder the optimization of the model. Therefore, it is crucial to find a better way to mitigate the adverse effect of these layers on adversarial robustness while benefiting from the presence of these layers in improving the optimization and generalization of the models. 

Batch normalization layers perform the following computation on the output of their preceding convolution layer:

\begin{align*}
    y = \frac{x-\E(x)}{\sqrt{\mathrm{Var}(x) + \epsilon}} \ast \gamma + \beta.
    \label{def-bn}
\end{align*}

As pointed out by~\cite{gouk2021regularisation}, by considering this layer as a linear transformation on $x-\E(x)$, the transformation matrix can be represented as a diagonal matrix with $\gamma_i/\sqrt{\mathrm{Var}(x_i)+ \epsilon}$ values, and therefore its largest singular value is equal to $\max_i \left(|\gamma_i|/\sqrt{\mathrm{Var}(x_i)+ \epsilon} \right)$. So, by changing the magnitude of $\gamma_i$ values we can clip the spectral norm of the batch norm layer. This approach has been followed in prior work~\citep{gouk2021regularisation,senderovich2022towards,delattre2023efficient}. In~\Cref{exp-bn-clip}, we show the results for the application of this clipping method and point out its disadvantages when used in practice. We will also show that the capability of our presented clipping method to work on the concatenation of convolutional and batch norm layers provides us with a better alternative. The full exploration of its potential, however, is left for future work.

\begin{figure}
\centering
\begin{subfigure}{.7\textwidth}
  \centering
  \includegraphics[width=.99\linewidth]{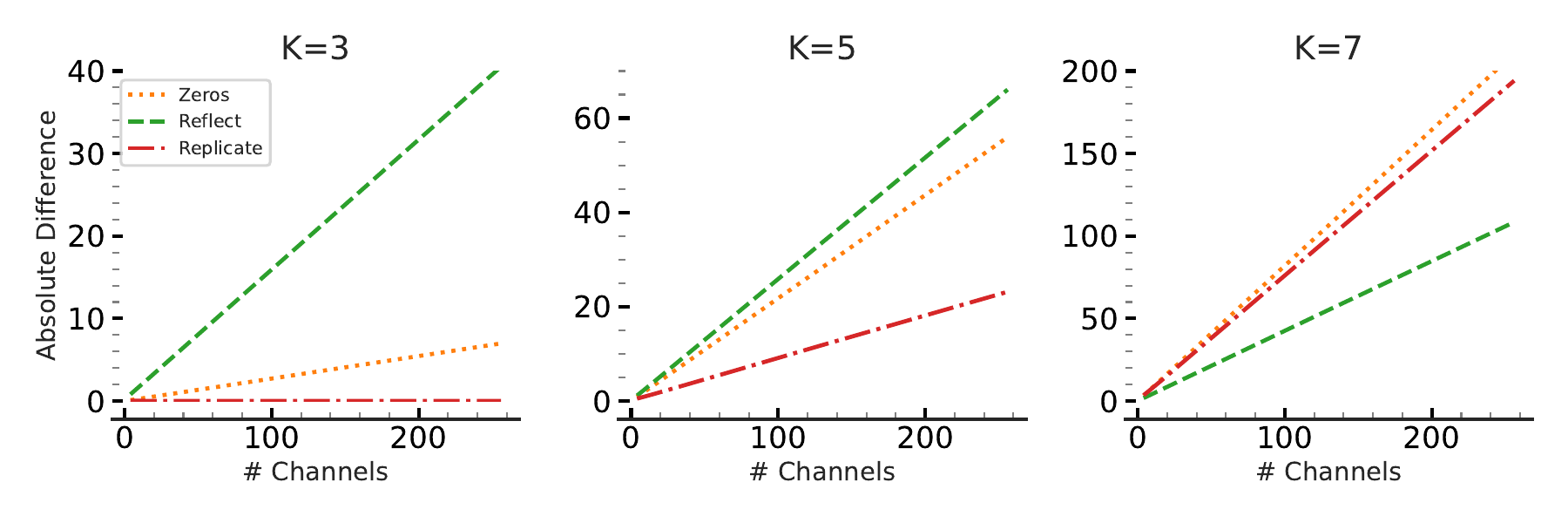}
  \caption{}
  \label{fig-circ-error}
\end{subfigure}%
\hfill
\begin{subfigure}{.3\textwidth}
  \centering
  \includegraphics[width=.95\linewidth]{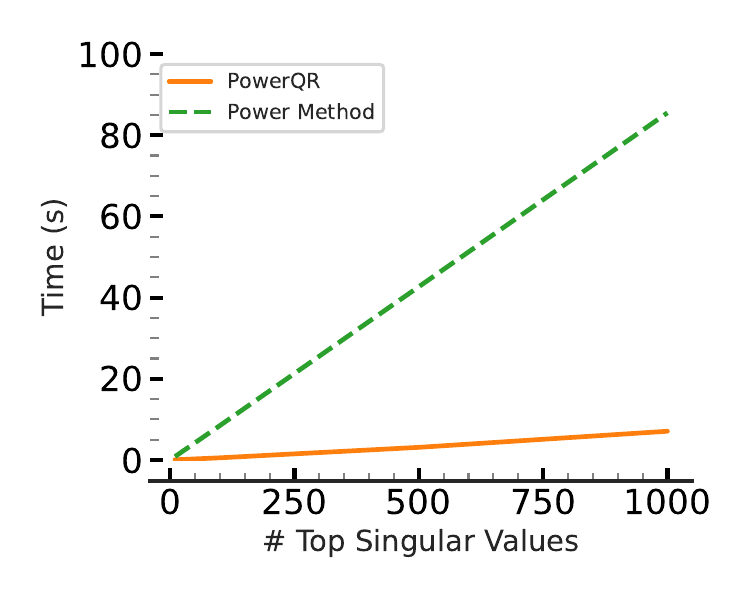}
  \caption{}
  \label{fig-QR-power-time}
\end{subfigure}
\caption{\textbf{a.} The absolute difference in the spectral norm of convolutional layers with different padding types and their circulant approximates for various kernel sizes ($3$, $5$, and $7$) and numbers of channels. The values are computed by averaging over $100$ convolutional filters drawn from a normal distribution for each setting. \textbf{b.} Comparison of the run-time of PowerQR (\cref{alg:powerqr}) to that of the pipeline used by~\citet{virmaux2018lipschitz} for computing the top-$k$ singular values. We considered a 2d-convolutional layer with $3\times 3$ filters and $32$ input/output channels. The convolution is applied to a $32\times 32$ image.}
\label{fig:fig-time-fft-small-and-fig-QR-power-time}
\end{figure}

\section{Experiments}
\label{apx-exp}

In this section, we elaborate on the experiments and results pointed out in section~\ref{sec-experiments} and present some additional experiments to show the advantages of our proposed methods. The models used for the experiments are ResNet-18, as it was used by prior work~\cite{gouk2021regularisation,senderovich2022towards,delattre2023efficient}, which has a small modification compared to the original model; for the residual connections in the model, they simply divide the output of the layer by $2$ so that if both of the layers are $1$-Lipschitz, the output of the module will remain $1$-Lipschitz. Similarly, for the optimization of these models, we use SGD optimizer and a simple scheduler that decays the learning rate by $0.1$ every $40$ steps. We train each of the models for $200$ epochs. In addition to ResNet-18 models, we perform the comparisons on DLA, as it is presented in a publicly available GitHub repository~\footnote[1]{https://github.com/kuangliu/pytorch-cifar} as a model that achieves better results on CIFAR-10. We use the same training procedure for these models as the ones used for the ResNet-18 model. For generating adversarial examples and evaluating the models, we use a publicly available repository~\footnote[2]{https://github.com/AI-secure/Transferability-Reduced-Smooth-Ensemble/tree/main}, along with the strongest default values for the attacks ($c = \epsilon=0.02$ and $c = \epsilon=0.1$ for CW and PGD attack on CIFAR-10 and MNIST, respectively), and also add their training procedure for MNIST to evaluate both ResNet-18 and DLA models on an additional dataset. We also use a simple model which consists of a single convolutional layer and a dense layer on the MNIST dataset for some of our experiments regarding the correctness of clipping methods~(\Cref{fig:simple-clip-compare}) and showing the compensation phenomenon~(\Cref{fig-compensation}).

\subsection{Error in Circulant Approximation}
\label{apx-exp-powerqr}

Our introduced method can be used to compute the exact spectrum of any implicitly linear layer, including different types of convolutional layers. However, the method introduced by~\citet{sedghi2018singular} only computes the spectrum of the convolutional layers when they have circular padding and no strides. This method was extended to convolutional layers with circular padding with strides other than $1$ by~\citet{senderovich2022towards}. These methods, as shown in~\Cref{subsec-comp-others}, are very slow and not practical for large models. These methods are also very memory consuming compared to other methods. More recently,~\citet{delattre2023efficient} introduced an algorithm that uses Gram iteration to derive a fast converging upper-bound for the convolutional layers with circular padding with a stride of $1$. As shown in~\Cref{fig:resnet-clip-compare}, this method is still much slower than our method for clipping and less accurate than our method and also its prior work on circulant convolutional layers. 
In~\Cref{fig-circ-error} we show that the approximation error using these method can be large on random convolutional layers. We use convolutional layers with different standard padding types with various filter sizes and choose the filter values from a normal distribution. Then we use the circulant approximation to approximate the spectral norm of these convolutional layers, and present the absolute difference with the true spectral norm. The presented values are averaged over $100$ trials for each setting. As the figure shows, the approximation error can be large, especially as the number of channels or the kernel size increases.

\subsection{Extracting Multiple Singular Values}
\label{apx-multi}

We also compared the run-time of the PowerQR with $100$ steps for extracting the top-$k$ eigenspace to the run-time of $k$ successive runs of the power method (as suggested in~\citet{virmaux2018lipschitz}). We did not include the deflation step (required for their proposed method to work), which makes the running time of their method even worse. Figure~\ref{fig-QR-power-time} in Appendix B shows that our implicit implementation of subspace iteration is dramatically more efficient.

\end{document}